\newcommand{\Pb}{\mathbb{P}}
\newcommand{\Nb}{\mathbb{N}}
\newcommand{\R}{\mathbb{R}}
\newcommand{\E}{\mathbb{E}}
\newcommand{\Hh}{\mathcal{H}}
\newcommand{\F}{\mathcal{F}}
\newcommand{\wealth}{\mathcal{K}}
\newcommand{\payoff}{\mathcal{S}}
\newcommand{\defined}{:=}
\newcommand{\lp}{\left(}
\newcommand{\rp}{\right)}
\newcommand{\lb}{\left[}
\newcommand{\rb}{\right]}
\newcommand{\rba}{\right|}
\newcommand{\lba}{\left|}
\newcommand{\cas}{\stackrel{a.s.}{\to}}
\newtheorem{definition}{Definition}
\newtheorem{theorem}{Theorem}
\newtheorem{proposition}{Proposition}
\begin{document}

% If your paper is accepted and the title of your paper is very long,
% the style will print as headings an error message. Use the following
% command to supply a shorter title of your paper so that it can be
% used as headings.
%
%\runningtitle{I use this title instead because the last one was very long}

% If your paper is accepted and the number of authors is large, the
% style will print as headings an error message. Use the following
% command to supply a shorter version of the authors names so that
% they can be used as headings (for example, use only the surnames)
%
%\runningauthor{Surname 1, Surname 2, Surname 3, ...., Surname n}

\twocolumn[

\aistatstitle{Sequential Kernelized Stein Discrepancy}

\aistatsauthor{ Diego Martinez-Taboada \And Aaditya Ramdas }

\aistatsaddress{ Department of Statistics \& Data Science \\
Carnegie Mellon University \\
diegomar@andrew.cmu.edu
\And  
Department of Statistics \& Data Science \\
Machine Learning Department \\
Carnegie Mellon University\\
aramdas@stat.cmu.edu} ]

\begin{abstract}
  
We present a sequential version of the kernelized Stein discrepancy goodness-of-fit test, which allows for conducting goodness-of-fit tests for unnormalized densities that are continuously monitored and adaptively stopped. That is, the sample size need not be fixed prior to data collection; the practitioner can choose whether to stop the test or continue to gather evidence at any time while controlling the false discovery rate. In stark contrast to related literature, we do not impose uniform boundedness on the Stein kernel. Instead, we exploit the potential boundedness of the Stein kernel at arbitrary point evaluations to define test martingales, that give way to the subsequent novel sequential tests. We prove the validity of the test, as well as an asymptotic lower bound for the logarithmic growth of the wealth process under the alternative.  We further illustrate the empirical performance of the test with a variety of distributions, including restricted Boltzmann machines.  
\end{abstract}

\section{INTRODUCTION} \label{sec:introduction}

Many statistical procedures heavily rely on the assumptions made about the distribution of the empirical observations, and prove invalid if such conditions are violated. For instance, a high-energy physicist may develop a generative model seeking to produce synthetic observations of particle collisions, motivated by the high cost of experimenting on a real particle collider \citep{agostinelli2003geant4, chekalina2019generative}. If the model is not accurate, then any conclusion that derives from such synthetic data will lack any scientific interest \citep{huang2023learning, masserano2023simulator}. The question of whether the data follows a particular distribution may be posed in terms of goodness-of-fit testing \citep{lehmann1986testing, gonzalez2011general, d2017goodness}. Formally, the simple goodness-of-fit testing problem considers the null hypothesis $H_0: Q = P$ against the alternative $H_1: Q \neq P$, for a given $P$ and access to data $X_1, X_2, \ldots \sim Q$. It may also be the case that we have a set of given distributions as candidates and we wish to test if \textit{any} of them accurately models the empirical data \citep{durbin1975kolmogorov, key2021composite}. In such a case, the goodness-of-fit problem is posed in terms of a composite null hypothesis $H_0: Q \in \mathcal{P}$, against the alternative hypothesis $H_1: Q \not\in \mathcal{P}$.

In this work, we focus on a particularly challenging scenario that arises from not having full access to $P$ (or $\mathcal{P}$). We handle classes of distributions whose densities are only known up to normalizing constants. This is a common scenario when working with general energy-based models \citep{lecun2006tutorial, du2019implicit}, such as Ising models \citep{geman1984stochastic, clifford1990markov} and restricted Boltzmann machines \citep{ackley1985learning, hinton2010practical}, and in Bayesian statistics, where normalizing factors are generally neither known in closed form nor computable \citep{bolstad2016introduction}. Most of the existing works in the literature regarding goodness-of-fit for unnormalized densities have focused on the \textit{batch} or \textit{fixed sample size} setting, where the number of samples $n$ is decided before conducting the analysis, which is later on carried on using observations $X_1, \ldots, X_n$ \citep{liu2016kernelized, chwialkowski2016kernel, gorham2017measuring}. Nonetheless, this approach comes with several major drawbacks. Because the number of observations in batch tests is determined prior to data collection, there is a risk that too many observations may be allocated to simpler problem instances, thus wasting resources, or that insufficient observations are assigned to more complex instances, which can yield insufficient evidence against the null hypothesis. Furthermore, when test outcomes appear encouraging but not definitive (for instance, if a p-value is marginally higher than a specified significance threshold), one may be tempted to augment the dataset and undertake the investigation again. Traditional batch testing methods, however, do not allow for this approach.

In contrast, we seek to develop tests that are anytime valid \citep{ramdas2023game, grunwald2020safe}. That is, we can repeatedly decide whether to collect more data based on the current state of the procedure without compromising later assessments, halting the process at any time and for any reason. Formally, a sequential test which is stopped at any given arbitrary moment can be represented by a random stopping time $\tau$ taking values in $\{ 1, 2, \ldots \} \cup \{ \infty\}$. The stopping time $\tau$ denotes the random time at which the null hypothesis is rejected. A sequential test is called level-$\alpha$ if it satisfies the condition $\Pb (\tau < \infty) \leq \alpha$ under the null for any stopping time $\tau$. 

In this work, we develop level-$\alpha$ sequential goodness-of-fit tests that accommodate for unnormalized densities, building on the concept of kernelized Stein discrepancies. The type I error is controlled even if the tests are continuously monitored and adaptively stopped, hence automatically adapting the sample size to the unknown alternative. In particular, we believe these anytime-valid tests will be of remarkable interest in the following scenarios: 
\begin{itemize}
\setlength\itemsep{0em}
    \item \textbf{Measuring the quality of a proposed distribution:} We may have a specific candidate distribution (or set of candidate distributions) to model the data whose normalizing constant is unknown (e.g., some complex network that models the interaction between genes). Does this distribution accurately model the true data generation process? 
    \item \textbf{Measuring sample quality:} We may have obtained an unnormalized density, such as some posterior distribution in Bayesian statistics, that we wish to draw from using Markov Chain Monte Carlo (MCMC) methods. Is the chosen MCMC algorithm yielding samples accurately from such a density?\footnote{There exist complexities when sequentially analyzing the fitness of MCMC samples, mainly related to autocorrelation and burn-in stages. In order to minimize the effect of autocorrelation, one could skip samples and only input one every X samples for the sequential test, where X is large enough to considerably lower the correlation between samples. In order to avoid early rejection due to burn-in stage, sequential tests can be initialized after such a burn-in period.}
\end{itemize}
The anytime-valid guarantees allow for minimizing the sample size required to reject the null if it does not hold, such as the number of expensive gene experiments that we have to run in a lab, or the number of samples that we have to draw from a costly MCMC algorithm. 

The work is organized as follows. Section \ref{sec:problem_setup} presents the related work. Section \ref{sec:preliminaries} introduces preliminary work, with special emphasis on the kernelized Stein discrepancy and testing by betting. These constitute the theoretical foundations of the novel goodness-of-fit tests, presented in Section \ref{sec:main_results}. Section \ref{sec:derivation} subsequently provides a general derivation for the lower bounds required in the test, alongside different examples. The empirical validity of the tests is presented in Section \ref{sec:experiments}, followed by concluding remarks in Section \ref{sec:conclusion}.

\section{RELATED WORK} \label{sec:problem_setup}

Our contribution falls within the scope of `sequential, anytime-valid inference', a field which encompasses confidence sequences \citep{waudby2024estimating} and e-processes \citep{ramdas2022testing, grunwald2020safe}. In particular, we exploit the techniques of `testing by betting', which was recently popularized by \cite{shafer2021testing}, and is based on applying Ville’s maximal inequality \citep{ville1939etude} to a test (super)martingale \citep{shafer2011test}. Any other approach for constructing confidence sequences can be, in principle, outperformed by applying maximal inequalities to (super)martingale constructions \citep{ramdas2020admissible}. The interest on this  line of research has recently exploded; we refer to reader to \citet{ramdas2023game} and the references therein for a detailed introduction to the field. 

Concurrently, kernel methods have received widespread attention due to their notable empirical performance and theoretical guarantees. Reproducing kernel Hilbert spaces provide the theoretical foundation for these methods \citep{scholkopf2002learning, berlinet2011reproducing}. We highlight three predominant applications of kernel methods. First, the maximum mean discrepancy (MMD) has given way to kernel-based two sample tests \citep{gretton2006kernel, gretton2012kernel}. Second, independence tests have been developed based on the Hilbert Schmidt independence criterion (HSIC) \citep{gretton2005measuring, gretton2007kernel}. Third, kernelized Stein discrepancies (KSD) have led to novel goodness-of-fit tests \citep{liu2016kernelized, chwialkowski2016kernel, gorham2017measuring}. While the analysis of this contribution is tailored to so-called Langevin KSD, the ideas presented herein could potentially be extended to abstract domains including categorical data \citep{yang2018goodness}, censored data \citep{fernandez2020kernelized}, directional data \citep{xu2020stein}, functional data \citep{wynne2025fourier}, sequential data \citep{baum2023kernel}, and point processes \citep{yang2019stein}. 

At the intersection of testing by betting and kernels, \citet{shekhar2023nonparametric} developed a sequential MMD for conducting two sample tests that can be arbitrarily stopped. Similarly, \citet{podkopaev2023sequential} extended the HSIC to the sequential setting. Recently, \citet{zhou2024sequential} proposed a sequential KSD, in a similar spirit to our contribution.\footnote{We defer a comprehensive comparison of \citet{zhou2024sequential} and our contribution to Appendix~\ref{appendix:comparison}.} These three works rely on the uniform boundedness of the chosen kernel, which is key for the construction of nonnegative martingales. While some kernels commonly used for the MMD and HSIC (such as the RBF or Laplace kernels) are uniformly bounded, this is very rarely the case for the Stein kernel exploited by the KSD. The methodology proposed in this contribution does not assume uniform boundedness of the Stein kernel, which poses a number of theoretical challenges that will be addressed in the subsequent sections.

\section{BACKGROUND} \label{sec:preliminaries}

\subsection{The Kernelized Stein Discrepancy}

Throughout this contribution we will draw upon the kernelized Stein discrepancy (KSD), which is a distributional divergence that builds on the concept of reproducing kernel Hilbert space (RKHS) and the general Stein's method \citep{stein1972bound, gorham2015measuring}. The KSD may be understood as a kernelized version of score-matching divergence \citep{hyvarinen2005estimation}.

\textbf{Reproducing kernel Hilbert spaces (RKHS):} Consider $\mathcal{X} \neq \emptyset$ and a Hilbert space $(\mathcal{H}, \langle \cdot, \cdot \rangle_\mathcal{H})$ of functions $f: \mathcal{X} \to \mathbb{R}$. If there exists a function $k: \mathcal{X} \times \mathcal{X} \to \mathbb{R}$ satisfying (i) $k(\cdot, x) \in \mathcal{H}$ for all $x \in \mathcal{X}$, (ii) $\langle f, k(\cdot, x) \rangle_\mathcal{H} = f(x)$ for all $x \in \mathcal{X}$ and $f \in \mathcal{H}$, then $\mathcal{H}$ is called an RKHS and $k$ a reproducing kernel. We denote by $\mathcal{H}^d$ the product RKHS containing elements $h := (h_1, \ldots, h_d)$ with $h_i \in \mathcal{H}$ and $\langle h, \tilde h \rangle = \sum_{i = 1}^d \langle h_i, \tilde h_i\rangle_{\mathcal{H}}$.

\textbf{Kernelized Stein discrepancies (KSD):} Assume now that $P$ has density $p$ on $\mathcal{X} \subset \R^d$, and let $\mathcal{H}$ be an RKHS with reproducing kernel $k$ such that $\nabla k(\cdot, x)$ exists for all $x \in \mathcal{X}$. Based on the Stein operator 
\begin{align*}
    (T_{p} h)(x) = \sum_{i = 1}^d\left(\frac{\partial \log {p} (x)}{\partial x_i}h_i(x) + \frac{\partial h_i(x)}{\partial x_i} \right), \; h \in \Hh^d,
\end{align*} 
the KSD is defined as
\begin{align*}
    \text{KSD}_\Hh(Q, P) = \sup_{f\in\F_{\text{KSD}}} \E_{X \sim Q}[(T_{p}f)(X)],
\end{align*}
where we consider the set $\F_{\text{KSD}} = \{ h \in \Hh^d: \|h\|_{\Hh^d} \leq 1 \}$. Defining $s_p(x) := \nabla_x \log p(x)$ and $\xi_p(\cdot, x) := \left[ s_p(x) k(\cdot, x) + \nabla k(\cdot, x)\right] \in \mathcal{H}^d$, it follows that
\begin{align} 
    h_p(x, \tilde x) :&= \langle \xi_p (\cdot, x), \xi_p (\cdot, \tilde x) \rangle_{\mathcal{H}^d}\nonumber\\ & =\langle s_p(x), s_p(\tilde x) \rangle_{\R^d} k(x, \tilde x)\nonumber\\
    &\quad+ \langle s_p(\tilde x), \nabla_{x} k(x, \tilde x) \rangle_{\R^d} \nonumber\\
    &\quad + \langle s_p(x), \nabla_{\tilde x} k(x, \tilde x) \rangle_{\R^d} + \nabla_{x} \cdot \nabla_{\tilde x} k(x, \tilde x) \label{eq:closedform} % \langle \nabla_{x} k(\cdot, x), \nabla_{\tilde x} k(\cdot, \tilde x) \rangle_{\mathcal{H}^d} 
\end{align}
is a reproducing kernel based on the Moore-Aronszajn theorem. Again, if $\E_{X\sim Q} \sqrt{h_p(X, X)} < \infty$, then $\text{KSD}_\Hh(Q, P) = \| \E_{X \sim Q} \left[\xi_p(\cdot, X)\right]\|_{\mathcal{H}^d}$. If $k$ is universal and under certain regularity conditions, then $\text{KSD}_\Hh(Q, P) = 0$ if, and only if, $Q = P$ \citep{chwialkowski2016kernel}. In the batch setting and simple null hyposthesis, a test statistic is generally taken as a V-statistic or U-statistic, and parametric or wild bootstrap is used to calibrate the test. We refer the reader to \citet{key2021composite} for the more challenging composite null hypothesis case.\footnote{In the context of approximating an integral, the KSD also allows for evaluating the realized empirical measure (rather than the marginal distribution). We refer the reader to \citet{kanagawa2022controlling} for a recent account of this research direction.}

\subsection{Testing by Betting}

We now present the theoretical foundation of the sequential testing strategy that will be presented in Section \ref{sec:main_results}, which is commonly referred to as \textit{testing by betting}. The key idea behind this general, powerful concept relies on defining a test (super)martingale, and couple it with a betting interpretation \citep{shafer2019game, shafer2021testing}. Test (super)martingales allow for exploiting Ville's inequality \citep{ville1939etude} and hence they yield level-$\alpha$ sequential tests. In turn, this enables practitioners to peek at the data at anytime and decide whether to keep gathering data or stop accordingly while preserving theoretical guarantees. 

Intuitively, let $H_0$ be the null hypothesis to be tested. A fictional bettor starts with an initial wealth of $\wealth_0=1$. The fictional bettor then bets sequentially on the outcomes $(X_t)_{t \geq 1}$. To do so, at each round $t$, the fictional bettor chooses a payoff function $\payoff_t: \mathcal{X} \to [0, \infty)$ so that $\mathbb{E}_{H_0}[\payoff_t(X_t)|\F_{t-1}] \leq 1$,  where $\F_{t-1} = \sigma(X_1, \ldots, X_{t-1})$ (this ensures a \textit{fair bet} if the null is true). 
After the outcome $X_t$ is revealed, the bettor's wealth grows or shrinks by a factor of $\payoff_t(Z_t)$. The bettor's wealth is $\wealth_t =\wealth_0 \prod_{i=1}^t \payoff_i(X_i)$ after $t$ rounds of betting.

Under the null hypothesis, these \textit{fair bets} ensure that the sequence $(\wealth_t)_{ t\geq 0}$ forms a `test supermartingale' (i.e., a nonnegative supermartingale that starts at 1), and in particular a `test martingale' if $\mathbb{E}_{H_0}[\payoff_t(X_t)|\F_{t-1}]$ is $1$ almost surely. Based on Ville's inequality, rejecting the null if $\wealth_t \geq 1/\alpha$, where $\alpha \in (0,1)$ is the desired confidence level, leads to sequential tests that control the type-I error at level $\alpha$. Under the alternative $H_1$, the payoff functions $\{\payoff_t: t \geq 1\}$ should seek a fast growth rate of the wealth, ideally exponentially.

\section{SEQUENTIAL GOODNESS-OF-FIT BY BETTING} \label{sec:main_results}

We now consider a stream of data $X_1, X_2, \ldots \sim Q$. In the context of testing by betting, it is natural to consider test martingales of the form
\begin{align}
    &\wealth_t = \wealth_{t-1} \times \lp 1 + \lambda_t g_t(X_t)  \rp, \quad \wealth_0 = 1, \label{eq:wealth-process-def} \\
    &\tau \defined \min \{t \geq 1: \wealth_t \geq 1/\alpha\} \label{eq:two-sample-test-def}, 
\end{align}
where $\lambda_t$ and $g_t$ are predictable and
\begin{itemize}
    \item $\mathbb{E}_{H_0}\left[ g_t(X_t) \big|\F_{t-1}\right] \leq 0$ (to ensure $\wealth_t$ is a supermartingale under the null hypothesis),
    \item $g_t \geq -1$ and $\lambda_t \in [0, 1]$ (to ensure that $\wealth_t$ is non-negative),
\end{itemize} 
so that $\wealth_t$ is a test supermartingale. Intuitively, $g_t(X_t)$ defines the payoff function, and $\lambda_t$ the proportion of the wealth that the bettor is willing to risk at round $t$. Usually, the payoff function $g_t$ is assumed to belong to a set $\mathcal{G}$ that is \textbf{uniformly bounded by one in absolute value}  and such that $\mathbb{E}_{H_0}\left[ g(X) \big|\F_{t-1}\right] \leq 0$ for all $g \in \mathcal{G}$, and is chosen predictably seeking to maximize $\wealth_t$ under the alternative.

For the Stein kernel $h_p$, we note that, if  $\E[h_p(X, X)] < \infty$ \citep[Lemma 5.1]{chwialkowski2016kernel}, 
\begin{align} \label{eq:ksd_zero_mean}
    &\mathbb{E}_{H_0}\left[ \frac{1}{t-1} \sum_{i = 1}^{t-1} h_p(X_i, X_t)\bigg|\F_{t-1}\right] 
    \\=&\mathbb{E}_{H_0}\left[ \frac{1}{t-1} \sum_{i = 1}^{t-1} \langle \xi_p (\cdot, X_i), \xi_p(\cdot, X_t) \rangle_{\mathcal{H}^d} \bigg|\F_{t-1}\right] \nonumber\\
    = &\frac{1}{t-1} \sum_{i = 1}^{t-1} \langle \xi_p (\cdot, X_i), \mathbb{E}_{H_0}\left[  \xi_p (\cdot, X_t)  \big|\F_{t-1}\right] \rangle_{\mathcal{H}^d} \nonumber\\
    \stackrel{}{=} &\frac{1}{t-1} \sum_{i = 1}^{t-1} \langle \xi_p (\cdot, X_i), 0 \rangle_{\mathcal{H}^d} \nonumber\\
    \stackrel{}{=} &0. \nonumber
\end{align}
Hence, if $h_p$ was uniformly bounded by one, we could define $g_t(x) = \frac{1}{t-1}\sum_{i = 1}^{t-1} h_p(X_i, x)$. Similar payoff functions have been proposed in \citet{shekhar2023nonparametric} for two sample testing, and in \citet{podkopaev2023sequential} for independence testing. In those settings, it is natural to consider kernels that are indeed uniformly bounded by one, and this bound is not too loose\footnote{\citet{shekhar2023nonparametric} proposed an extension to unbounded kernels. They exploit the symmetry (around zero) of their payoff function under the null, which follows from the nature of two sample testing. Such a symmetry does not hold for us.}. This is the case for the ubiquitous Gaussian (or RBF) and Laplace kernels. 

In stark contrast, the Stein kernel $h_p$ need not be uniformly bounded even if built from a uniformly bounded $k$ (or it may be uniformly bounded by an extremely large constant, which would imply a remarkable loss in power if simply normalizing by that constant). Consider the inverse-multi quadratic (IMQ) kernel
\begin{align*}
    k_{\text{IMQ}}(x, y) = (1 + \|x - y \|^2)^{-1/2}
\end{align*}
The IMQ kernel has been extensively employed as the base kernel for constructing the Stein kernel. The success of the IMQ kernel over other common characteristic kernels can be attributed to its slow decay rate \citep{gorham2017measuring}. For this reason, we build the Stein kernel $h_p$ from $k_{\text{IMQ}}$ throughout. Now take $P = \mathcal{N}(0, 1)$ to be a standard normal distribution.
In such a case, $s_p(x) = -x$, and $h_p(x, x) = x^2 + 1$. This implies that $\sup_{x \in \R} h_p(x, x) = \infty$, i.e., the kernel $h_p$ is not bounded.

Nonetheless, it is very possible that $x \mapsto h_p(x, \tilde x)$ is bounded for every fixed $\tilde x$. We can rewrite 
\begin{align*}
    h_p(x, \tilde x) &= \langle \xi_p (\cdot, x), \xi_p (\cdot, \tilde x) \rangle_{\mathcal{H}^d} 
    \\&= \|\xi_p (\cdot, x)\|_{\Hh^d} \|\xi_p (\cdot, \tilde x)\|_{\Hh^d} \cos \beta(x, \tilde x),
\end{align*}
where $\cos \beta(x, \tilde x)$ is the angle between $\xi_p (\cdot, x)$ and $\xi_p (\cdot, \tilde x)$ in the Hilbert space ${\Hh^d}$ . Interestingly, while the embeddings $\|\xi_p (\cdot, x)\|_{\Hh^d}$ may not be uniformly bounded (this is equivalent to $h_p$ not being uniformly bounded), $\cos \beta(x, \tilde x)$ may decay to zero faster than $\|\xi_p (\cdot, x)\|_\Hh$ approaches infinity (for any given $\tilde x$).  From now on, we assume that we have access to a function $M_p: \mathcal{X} \mapsto \R_{\geq 0}$ such that  
\begin{align*}
    M_{p}(\tilde x) \geq -\inf_{x \in \mathcal{X}} h_p(x, \tilde x).
\end{align*}

The upper bound $M_p$ plays a key role in the proposed test, as it allows for defining martingales that are nonnegative. We defer to Section \ref{sec:derivation} a general method to derive such an upper bound, as well as specific examples of such derivation.

\subsection{Simple Null Hypothesis} \label{subsec:unnormalized}

 We are now ready to present the novel sequential goodness-of-fit tests. Let $P$ be a distribution which is known up to its normalizing constant, and let us consider the null hypothesis $H_0: Q = P$ against the alternative $H_1: Q \neq P$. Note that the scores $s_p(x)$ of $P$ do not depend on such normalizing factors and thus the reproducing kernel $h_p$ is computable. 
 
 Again, we emphasize that defining a wealth process based on the payoff function $\frac{1}{t-1} \sum_{i = 1}^{t-1} h_p(X_i, x)$ does not, in principle, yield a nonnegative martingale under the null. Nonetheless, the normalized payoff function
\begin{align} \label{eq:gt_definition}
    g_t(x) = \frac{1}{\frac{1}{t-1}\sum_{i=1}^{t-1} M_{p}(X_i)}\left(\frac{1}{t-1} \sum_{i = 1}^{t-1} h_p(X_i, x)\right).
\end{align}
is lower bounded by -1. Hence, as long as the betting strategy $\lambda_t$ is nonnegative and bounded above by 1, we are able to define a wealth process that forms a test martingale. The following theorem formally establishes such a result; the proof is deferred to Appendix \ref{appendix:proofs}.

\begin{theorem}[Validity under null.] \label{theorem:ksd_simple}
Assume that $\E_{H_0}[h_p(X, X')]=0$, and let $\lambda_t \in [0, 1]$ be predictable. The wealth process 
\begin{align} \label{eq:wealth_definition}
    &\wealth_t = \wealth_{t-1} \times \lp 1 + \lambda_t g_t(X_t)  \rp, \quad \wealth_0 = 1,
\end{align}
where $g_t$ is defined as in~\eqref{eq:gt_definition},
is a test martingale. The stopping time
\begin{align*}
     &\tau \defined \min \{t \geq 1: \wealth_t \geq 1/\alpha\}
\end{align*}
defines a level-$\alpha$ sequential test.
\end{theorem}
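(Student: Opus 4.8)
The plan is to verify directly that $(\wealth_t)_{t \geq 0}$ satisfies the three defining properties of a test martingale --- nonnegativity, $\wealth_0 = 1$, and the martingale property under $H_0$ --- and then to invoke Ville's inequality to obtain the level-$\alpha$ guarantee. Since $\wealth_0 = 1$ holds by definition and $\wealth_t = \wealth_{t-1}(1 + \lambda_t g_t(X_t))$ is a product, it suffices to control the single factor $1 + \lambda_t g_t(X_t)$ at each step.

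For nonnegativity, the crux is showing $g_t(X_t) \geq -1$, which is exactly what the bound $M_p$ is designed to deliver. First I would use the symmetry $h_p(X_i, x) = h_p(x, X_i)$ (the kernel is an inner product) together with the defining inequality $M_p(\tilde x) \geq -\inf_{x \in \X} h_p(x, \tilde x)$ applied at $\tilde x = X_i$ to obtain the pointwise lower bound $h_p(X_i, x) \geq -M_p(X_i)$ valid for every $x$. Averaging over $i = 1, \ldots, t-1$ and dividing by the nonnegative normalizer $\frac{1}{t-1}\sum_{i=1}^{t-1} M_p(X_i)$ then yields $g_t(x) \geq -1$ for all $x$, and in particular $g_t(X_t) \geq -1$. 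Because $\lambda_t \in [0,1]$, this gives $1 + \lambda_t g_t(X_t) \geq 1 - \lambda_t \geq 0$, so $\wealth_t \geq 0$. One minor point to address is the degenerate event on which the normalizer vanishes, i.e. all $M_p(X_i) = 0$; since $M_p \geq 0$ this forces $g_t \equiv 0$ there, and the convention $g_t := 0$ keeps the bound intact.

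For the martingale property I would condition on $\F_{t-1}$ and exploit predictability. The factors $\wealth_{t-1}$, $\lambda_t$, and the normalizer $\frac{1}{t-1}\sum_i M_p(X_i)$ are all $\F_{t-1}$-measurable, so they pull out of the conditional expectation, leaving $\E_{H_0}[\wealth_t \mid \F_{t-1}] = \wealth_{t-1}(1 + \lambda_t \E_{H_0}[g_t(X_t) \mid \F_{t-1}])$. The theorem's hypothesis $\E_{H_0}[h_p(X,X')] = 0$ is equivalent to $\|\E_{X \sim P}[\xi_p(\cdot, X)]\|_{\Hh^d}^2 = 0$, i.e. to the vanishing of the mean embedding $\E_{H_0}[\xi_p(\cdot, X_t) \mid \F_{t-1}] = 0$; substituting this into the chain of equalities in~\eqref{eq:ksd_zero_mean} gives $\E_{H_0}[\frac{1}{t-1}\sum_i h_p(X_i, X_t) \mid \F_{t-1}] = 0$, hence $\E_{H_0}[g_t(X_t) \mid \F_{t-1}] = 0$ and $\E_{H_0}[\wealth_t \mid \F_{t-1}] = \wealth_{t-1}$.

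Finally, having established that $(\wealth_t)$ is a nonnegative martingale started at $1$, I would apply Ville's inequality to conclude $\Pb_{H_0}(\sup_{t \geq 1} \wealth_t \geq 1/\alpha) \leq \alpha$. Since $\{\tau < \infty\} = \{\sup_{t} \wealth_t \geq 1/\alpha\}$, this is exactly the claimed level-$\alpha$ statement $\Pb_{H_0}(\tau < \infty) \leq \alpha$. I expect the main obstacle to be the nonnegativity step: it is the only place where the nonstandard object $M_p$ enters, and one must keep careful track of the symmetry of $h_p$, the direction of the infimum in the definition of $M_p$, and the sign of the normalizer --- the martingale property and the Ville step are then essentially routine given~\eqref{eq:ksd_zero_mean}.
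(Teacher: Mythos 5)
Your proposal is correct and follows essentially the same route as the paper's proof: establish $g_t \geq -1$ from the defining inequality for $M_p$, pull the predictable factors out of the conditional expectation and use~\eqref{eq:ksd_zero_mean} to get the martingale property, then apply Ville's inequality. Your added remarks on the symmetry of $h_p$, the degenerate case of a vanishing normalizer, and the equivalence of $\E_{H_0}[h_p(X,X')]=0$ with the vanishing mean embedding are small refinements the paper leaves implicit, not departures from its argument.
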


Algorithm~\ref{alg:ksd} summarizes the procedure introduced in this section. Given that the complexity of computing $g_t$ is $O(t)$ for each $t$, the complexity of Algorithm~\ref{alg:ksd} for $T$ rounds is $O(T^2)$. Note that $\E[h_p(X, X')] = 0$ under the null if $\E[h_p(X, X)] < \infty$, and thus in that case Theorem \ref{theorem:ksd_simple} establishes the validity of Algorithm \ref{alg:ksd} for any betting strategy that is predictable. However, the power of the test will heavily depend on the chosen betting strategy. For instance, if we take $\lambda_t = 0$ for all $t$ (this is, we never bet any money), then the wealth will remain constant as $\wealth_t = 1$, and so the test is powerless.  There exists a variety of betting strategies that have been studied in the literature. In this contribution, we focus on aGRAPA (`approximate GRAPA') and LBOW (`Lower-Bound On the Wealth'). We refer the reader to \cite{waudby2024estimating} to a detailed presentation of different betting strategies.

\begin{definition} [aGRAPA strategy]
Let $\lp g_i(X_i)\rp_{t \geq 1} \in [-1, \infty)^{\Nb}$ denote a sequence of outcomes. Initialize $\lambda_1^{\text{aGRAPA}} = 0$. For each round $t = 1, 2, \ldots$, observe payoff $g_i(X_i)$ and update 
\begin{align} \label{eq:agrapa}
    \lambda_{t+1}^{\text{aGRAPA}} = 1 \wedge \lp0 \vee \frac{\frac{1}{t-1}\sum_{i = 1}^{t-1} g_i(X_i)}{ \frac{1}{t-1}\sum_{i = 1}^{t-1} g_i^2(X_i)}\rp.
\end{align}
\end{definition}

\begin{definition} [LBOW strategy]
Let $\lp g_i(X_i)\rp_{t \geq 1} \in [-1, \infty)^{\Nb}$ denote a sequence of outcomes. Initialize $\lambda_1^{\text{LBOW}} = 0$. For each round $t = 1, 2, \ldots$, observe payoff $g_i(X_i)$ and update 
\begin{align} \label{eq:lbow}
    \lambda_{t+1}^{\text{LBOW}} = 0\vee \frac{\frac{1}{t-1}\sum_{i = 1}^{t-1} g_i(X_i)}{\frac{1}{t-1}\sum_{i = 1}^{t-1} g_i(X_i) + \frac{1}{t-1}\sum_{i = 1}^{t-1} g_i^2(X_i)}.
\end{align}
\end{definition}

We have introduced versions of the betting strategies where we force $\lambda_t$ to be nonnegative. This need not always be the case. The idea of not allowing for negative bets was exploited by \citet{podkopaev2023sequential} as well, motivated by the fact that positive payoffs are expected under the alternative. The motivation is double in this work, as we also expect positive payoffs under the alternative, but the nonnegativity of $\lambda_t$ allows us to only having to lower bound the payoff function $g_t$, instead of lower and upper bounding it.  While the aGRAPA strategy shows better empirical performance, LBOW allows for providing theoretical guarantees of the asymptotic wealth under the alternative.

\begin{theorem}[E-power under alternative]
\label{theorem:ksd_simple_power_lbow}

Let $(X_i)_{i\geq1}$ be independent and identically distributed copies of $X$. Assume $\E[h_p(X, X)] < \infty$ and $\E[M_{p}(X)] < \infty$. Denote $g^*(x) := \E[ h_p(X, x)] / \E[M_p(X)]$. Under $H_1$, if $\E[g^*(X)] > 0$, the LBOW betting strategy yields 
\begin{align*}
    \liminf_{t \to \infty}\frac{\log\wealth_t}{t}  \geq \frac{\left(\E[g^*(X)]\right)^2 / 2}{\E[g^*(X)] + \E[\left(g^*(X)\right)^2]} := r^*.
\end{align*}
It follows that $\wealth_t \stackrel{a.s.}{\to} \infty$ and $\Pb_{H_1} (\tau < \infty) = 1$.
\end{theorem}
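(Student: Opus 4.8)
The plan is to reduce everything to a law of large numbers for the predictable increments of $\log\wealth_t$. Since $\log\wealth_t=\sum_{i=1}^t\log\lp 1+\lambda_i g_i(X_i)\rp$, I would split the argument into two ingredients: (i) an almost-sure growth-rate statement, $\liminf_{t\to\infty}\frac1t\log\wealth_t\ge\E[\log(1+\lambda^* g^*(X))]$, where $\lambda^*\defined\E[g^*(X)]/(\E[g^*(X)]+\E[(g^*(X))^2])$ is the limiting LBOW fraction; and (ii) the quantitative bound $\E[\log(1+\lambda^* g^*(X))]\ge r^*$. Combining them gives the claim, and finiteness of $\tau$ follows immediately once $r^*>0$.

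First I would pin down the almost-sure limits of the running averages. Cauchy--Schwarz in $\Hh^d$ gives $|h_p(x,\tilde x)|\le\sqrt{h_p(x,x)\,h_p(\tilde x,\tilde x)}$, so $\E[h_p(X,X)]<\infty$ controls $\E[h_p(X,X')]$ and, by independence, also $\E[h_p(X,X')^2]\le(\E[h_p(X,X)])^2<\infty$; together with $\E[M_p(X)]<\infty$ this yields $\frac1{t-1}\sum_{i<t}M_p(X_i)\to\E[M_p(X)]$ and $\frac1{t-1}\sum_{i<t}h_p(X_i,x)\to\E[h_p(X,x)]$, hence $g_t\to g^*$ and $\E[(g^*(X))^2]<\infty$, so $\lambda^*\in(0,1)$. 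The running averages $\frac1{t-1}\sum_{i<t}g_i(X_i)$, $\frac1{t-1}\sum_{i<t}g_i^2(X_i)$ and $\frac1{t-1}\sum_{i<t}\log(1+\lambda_i g_i(X_i))$ are all handled by the same device: decompose each summand into its $\F_{i-1}$-conditional mean plus a martingale difference, use the martingale SLLN to kill the difference term in Cesàro average, and pass the conditional means to their limit. The first two limits give $\lambda_t\to\lambda^*$, and the third gives ingredient (i). This is the step I expect to be the main obstacle: the increments are not i.i.d.\ but predictable, and the Stein kernel is unbounded, so I must (a) justify convergence of $g_t$ evaluated at the \emph{fresh} random point $X_t$ via conditioning, and (b) obtain uniform lower control on the increments. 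The latter is exactly where $\lambda_t\to\lambda^*<1$ helps: eventually $1+\lambda_t g_t(X_t)\ge 1-\lambda_t$ is bounded away from $0$, so $\log(1+\lambda_t g_t(X_t))\ge\log(1-\lambda_t)$ is bounded below, and combined with the second-moment control this supplies both the variance bound for the martingale SLLN and the domination needed to pass the conditional means to $\E[\log(1+\lambda^* g^*(X))]$.

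For ingredient (ii), the key observation is that $\lambda^*$ is precisely the maximizer over $\lambda$ of the quadratic proxy $\lambda\,\E[g^*]-\tfrac{\lambda^2}2\lp\E[g^*]+\E[(g^*)^2]\rp$, whose optimal value equals $r^*$. I would therefore expand $\lambda\mapsto\E[\log(1+\lambda g^*)]$ about $0$ with integral remainder, obtaining $\E[\log(1+\lambda^* g^*)]-r^*=\tfrac12\E[g^*](\lambda^*)^2+\int_0^{\lambda^*}s^2\,\E\lb\tfrac{(g^*)^3}{1+sg^*}\rb ds$, and show the strictly positive first term dominates the remainder. The only negative contribution comes from $g^*$ near its lower bound $-1$; there I would use $g^*\ge-1$ (so $(g^*)^3\ge-(g^*)^2$ and $1+sg^*\ge1-\lambda^*$ on $s\le\lambda^*$) to bound the remainder by a multiple of $\E[(g^*)^2]$, and then invoke the identity $\lp\E[g^*]+\E[(g^*)^2]\rp(1-\lambda^*)=\E[(g^*)^2]$ to close the gap. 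This is the step that genuinely exploits the LBOW choice of denominator and the payoff lower bound of $-1$; note that the naive pointwise inequality $\log(1+\lambda g)\ge\lambda g-\tfrac{\lambda^2}2(g+g^2)$ fails for $g$ near $-1$, so the cancellation must be carried out in expectation rather than pointwise.

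Finally, since $\E[g^*(X)]>0$ forces $r^*>0$, ingredient (i) gives $\log\wealth_t\ge\tfrac{r^*}2 t$ for all large $t$ almost surely, whence $\wealth_t\cas\infty$; as $\wealth_t$ then crosses $1/\alpha$, the stopping time is almost surely finite, i.e.\ $\Pb_{H_1}(\tau<\infty)=1$.
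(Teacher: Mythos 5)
Your proposal is correct in outline but takes a genuinely different route from the paper. The paper's proof rests on a \emph{pointwise} inequality that you appear to have overlooked: for all $y\geq-1$ and $\lambda\in[0,1)$ one has $\log(1+\lambda y)\geq\lambda y+y^2\lp\log(1-\lambda)+\lambda\rp\geq\lambda y-y^2\tfrac{\lambda^2}{2(1-\lambda)}$. You correctly observe that the naive bound $\log(1+\lambda g)\geq\lambda g-\tfrac{\lambda^2}{2}(g+g^2)$ fails near $g=-1$, but the refined version with the $(1-\lambda)$ in the denominator does hold pointwise, and plugging in $\lambda=\lambda^*$ recovers exactly $r^*$ after taking limits. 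This lets the paper bypass your ingredient (ii) entirely and reduce the whole theorem to the almost-sure convergence of $\tfrac1t\sum_i g_i(X_i)$ and $\tfrac1t\sum_i g_i^2(X_i)$, which it establishes (Propositions \ref{proposition:first_order} and \ref{proposition:second_order}) via the Banach-space-valued SLLN for the embeddings $\tfrac{1}{i-1}\sum_k\xi_p(\cdot,X_k)$ together with Cauchy--Schwarz in $\Hh^d$ — no martingale SLLN is used anywhere. Your route instead runs a martingale-difference SLLN on the log-increments themselves and then proves $\E[\log(1+\lambda^*g^*(X))]\geq r^*$ by Taylor expansion with integral remainder; I checked that your remainder bound closes (using $(g^*)^3\geq-(g^*)^2$ on $[-1,0]$, $1+sg^*\geq1-\lambda^*$, and $b/(1-\lambda^*)=a+b$, one gets $\E[\log(1+\lambda^*g^*)]-r^*\geq\tfrac{(\lambda^*)^2}{2}\E[g^*]-\tfrac{(\lambda^*)^2}{3}\E[g^*]>0$), so your argument in fact delivers the strictly sharper growth rate $\E[\log(1+\lambda^*g^*(X))]$. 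What the paper's approach buys is simplicity: a single elementary inequality plus two LLNs. What yours buys is the exact first-order growth constant rather than a lower bound. One caveat: you have not really escaped the technical core of the paper. Passing the conditional means $\E[\log(1+\lambda_ig_i(X))\mid\F_{i-1}]$ and $\E[g_i^2(X)\mid\F_{i-1}]$ to their limits requires controlling $\sup_x|g_i(x)-g^*(x)|/\|\xi_p(\cdot,x)\|_{\Hh^d}$, which comes down to $\|\tfrac{1}{i-1}\sum_k\xi_p(\cdot,X_k)-\E[\xi_p(\cdot,X)]\|_{\Hh^d}\cas0$ — precisely the Hilbert-valued SLLN the paper invokes — so that ingredient should be stated explicitly rather than absorbed into ``the same device.''
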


Informally, the above theorem states that under the alternative, $\wealth_t \geq \exp(r^* t (1-o(1)))$, meaning that up to asymptotically negligible terms, the wealth grows exponentially fast (in the number of data points $t$) at the rate  $r^*$. Note that under the null, $\E[h_p(X, X')] = 0$, implying $\E[g^*(X)] = 0$, and thus $r^* = 0$, which accords with our claim that under the null, the wealth is a nonnegative martingale (whose expectation stays constant with $t$) and thus does not grow with $t$. This then implies that the stopping time of the test, which is the time at which the wealth exceeds $1/\alpha$, is (up to leading order) given by the expression $\log(1/\alpha)/r^*$.

The proof of Theorem \ref{theorem:ksd_simple_power_lbow} is deferred to Appendix \ref{appendix:proofs}. We point out that the unboundedness of the Stein kernel does not allow for easily extending the arguments presented in \citet{podkopaev2023sequential}, which rely on a different betting strategy whose guarantees stem from uniformly bounded payoff functions. Furthermore, we highlight the mildness of the assumptions in Theorem \ref{theorem:ksd_simple_power_lbow}. Assumption $\E[M_{p}(X)] < \infty$ only requires the first moment of the upper bounds to exist. This condition usually reduces to the existence of a specific moment of the original distribution, which is often easily verifiable.   Assumption $\E[h_p(X, X)] < \infty$ is an ubiquitous assumption in the KSD theory; it is equivalent to the existence of the second moment of $\|\xi_p(\cdot, X)\|_{\mathcal{H}^d}$, which is precisely the theoretical object that the KSD builds on. Finally, note that $P \neq Q$ does not necessarily imply $\E[g^*(X)] > 0$. However, there exist sufficient conditions for this implication to hold,  such as $C_0$-universality of $k$ and a moment condition on $\nabla \log (p(X)/q(X)$ \citep[Theorem 2.2.]{chwialkowski2016kernel} or a root exponential growth of $\| s_p \|$ for a rich family of base kernels \citep[Application 2]{barp2024targeted}. These conditions have already been extensively studied in the batch setting \citep{chwialkowski2016kernel, liu2016kernelized, gorham2017measuring, barp2024targeted}, and equally apply to our setting.

%Finally, note that $P \neq Q$ does not necessarily imply $\E[g^*(X)] > 0$ unless some regularity conditions hold, such as $C_0$-universality of $k$ and $\E[\nabla \log (p(X)/q(X))] < \infty$. These conditions have been extensively studied in the batch setting \citep{chwialkowski2016kernel, liu2016kernelized, gorham2017measuring}, and are inherent to the KSD approach. 
% The derivation of the bounds $M_p$ will naturally lead to $0 < \E[M_{p}(X)]$. and equivalent to $ \E\left[\|\xi_p(\cdot, X)\|_{\mathcal{H}^d}^2\right] < \infty$ the existence of the Stein kernel mean embedding, which is precisely the theoretical object that the KSD builds on 

\begin{algorithm}[!htb]
\caption{Sequential KSD}\label{alg:ksd}
\begin{algorithmic}
\State \textbf{Input:} Significance level $\alpha$; data stream $X_1, X_2, \ldots \sim Q$, score function $s_p$, base kernel $k$.
\State Define $h_p$ from $s_p$ and $k$ as in \eqref{eq:closedform}.
\For {$t=1,2,\dots$}
\State Observe $X_t$;
\State Compute $g_t(X_t)$ using \eqref{eq:gt_definition}  and $\wealth_t$ using \eqref{eq:wealth_definition};

\If{$\wealth_t\geq 1/\alpha$}
\State Reject $H_0$ and stop;
\Else 
\State Compute $\lambda_{t+1} \in [0, 1]$ following \eqref{eq:agrapa} or \eqref{eq:lbow};
\EndIf

\EndFor
\end{algorithmic}
\end{algorithm}

\subsection{Composite Null Hypothesis} \label{section:composite_null_hypothesis}

Let now $\mathcal{P} = \{ P_\theta: \theta \in \Theta \}$ be a set of distributions with unknown normalizing constants parametrized by $\theta$. By a slight abuse of notation, we denote $s_\theta = s_{p_\theta}$, $h_\theta = h_{p_\theta}$, $M_\theta = M_{p_\theta}$, and so on. Consider the null hypothesis $H_0: Q \in \mathcal{P}$ against $H_1: Q \not\in\mathcal{P}$. We define
\begin{align*}
    g_t^\theta(x) = \frac{1}{\frac{1}{t-1}\sum_{i=1}^{t-1} M_{\theta}(X_i)}\left(\frac{1}{t-1} \sum_{i = 1}^{t-1} h_\theta(X_i, x)\right)
\end{align*}
for $\theta \in \Theta$. We note that, under the null hypothesis, there exists $\theta_0 \in \Theta$ such that $Q = P_{\theta_0}$. Inspired by universal inference, we propose to consider the wealth process
\begin{align*}
    \wealth_t^C = \min_{\theta \in \Theta} \wealth^\theta_t.
\end{align*}
% However, for any arbitrary $\theta \in \Theta \backslash \{\theta_0\}$, it does not hold in general that $\mathbb{E}_{H_0}\left[ g^\theta_t(X_t) \big|\F_{t-1}\right] \leq 0$.

\begin{theorem} \label{theorem:ksd_composite}
    $\wealth_t^C$ is dominated by a test supermartingale, and $\tau$ is a level-$\alpha$ sequential test.
\end{theorem}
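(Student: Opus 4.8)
The plan is to establish that $\wealth_t^C = \min_{\theta \in \Theta} \wealth_t^\theta$ is dominated by a test supermartingale, and then invoke Ville's inequality to conclude the level-$\alpha$ guarantee. The central observation is that under the null hypothesis there exists some $\theta_0 \in \Theta$ with $Q = P_{\theta_0}$. For that specific parameter $\theta_0$, Theorem~\ref{theorem:ksd_simple} already tells us that $\wealth_t^{\theta_0}$ is a test martingale (hence a test supermartingale) under $Q = P_{\theta_0}$, since $\E_{H_0}[h_{\theta_0}(X, X')] = 0$, the payoff $g_t^{\theta_0}$ is lower bounded by $-1$, and $\lambda_t \in [0,1]$ is predictable.

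First I would note the deterministic pointwise domination
\begin{align*}
    \wealth_t^C = \min_{\theta \in \Theta} \wealth_t^\theta \leq \wealth_t^{\theta_0}
\end{align*}
which holds for every $t$ and every realization, simply because the minimum over $\Theta$ is no larger than the value at the particular index $\theta_0$. Since $\wealth_t^{\theta_0} \geq 0$ and the minimum of nonnegative quantities is nonnegative, $\wealth_t^C$ is itself nonnegative, and $\wealth_0^C = \min_\theta \wealth_0^\theta = 1$. Thus $\wealth_t^C$ is a nonnegative process starting at $1$ that is pointwise dominated by the test supermartingale $\wealth_t^{\theta_0}$. This is precisely the meaning of ``dominated by a test supermartingale'' and proves the first assertion.

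Next I would translate this domination into the type-I error control. Because $\wealth_t^C \leq \wealth_t^{\theta_0}$ everywhere, the event $\{\wealth_t^C \geq 1/\alpha\}$ is contained in $\{\wealth_t^{\theta_0} \geq 1/\alpha\}$. Consequently the stopping time $\tau = \min\{t \geq 1 : \wealth_t^C \geq 1/\alpha\}$ satisfies $\{\tau < \infty\} \subseteq \{\sup_t \wealth_t^{\theta_0} \geq 1/\alpha\}$. Applying Ville's maximal inequality to the test supermartingale $\wealth_t^{\theta_0}$ under $P_{\theta_0}$ yields
\begin{align*}
    \Pb_{H_0}(\tau < \infty) \leq \Pb_{P_{\theta_0}}\lp \sup_{t \geq 1} \wealth_t^{\theta_0} \geq 1/\alpha \rp \leq \alpha \E_{P_{\theta_0}}[\wealth_0^{\theta_0}] = \alpha,
\end{align*}
which is the desired level-$\alpha$ conclusion.

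I do not anticipate a genuine technical obstacle here, since the argument is the standard universal-inference domination trick; the only subtlety worth stating carefully is that $\theta_0$ is \emph{unknown} and data-dependent in the sense that the practitioner never needs to identify it, yet the bound holds because it holds simultaneously for the fixed (if unknown) true parameter. I would make explicit that the inequality $\wealth_t^C \leq \wealth_t^{\theta_0}$ is purely deterministic and requires no knowledge of $\theta_0$ by the procedure, so that the same realized process $\wealth_t^C$ is controlled regardless of which element of $\mathcal{P}$ generated the data. One should also confirm that each $\wealth_t^\theta$ is well-defined (the normalizing sums $\frac{1}{t-1}\sum_i M_\theta(X_i)$ are positive whenever $M_\theta$ is not identically zero), so that the minimum over $\Theta$ is taken over genuine test supermartingales; this is the mild regularity point to flag rather than a real difficulty.
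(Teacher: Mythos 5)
Your proposal is correct and follows essentially the same route as the paper's proof: fix the (unknown) true parameter $\theta_0$ under the null, use the deterministic domination $\wealth_t^C \leq \wealth_t^{\theta_0}$ where $\wealth_t^{\theta_0}$ is a test martingale by Theorem~\ref{theorem:ksd_simple}, and apply Ville's inequality. Your additional remarks on nonnegativity, the containment of stopping events, and well-definedness of the normalizers are sensible elaborations of the same argument rather than a different approach.
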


The minimizer may be computed differently depending on the nature of the problem. If $\Theta$ is finite, then it is obtained as the minimum of a discrete set. For arbitrary $\Theta$, it may be computed using numerical optimisation algorithms.

\section{DERIVATION OF SENSIBLE BOUNDS} \label{sec:derivation}

\subsection{A General Approach}

We highlight that the derivation of the bounds $M_p(x)$ is key on this approach. While these bounds heavily depend on the distribution $P$ through its score function $s_p$, we explore general ways of deriving bounds $M_p(x)$. We focus on deriving bounds for the IMQ. Nonetheless, we highlight that similar derivations would follow for other kernels, such as the RBF or Laplace kernels. We start by noting that, for the IMQ kernel, 
\begin{align*}
    \nabla_xk(x, \tilde x) &= -(1 + \| x - \tilde x\|^2)^{-\frac{3}{2}}(x - \tilde x), 
    \\ \nabla_{x} \cdot \nabla_{\tilde x} k(x, \tilde x) &= -3(1 + \|x - \tilde x \|^2)^{-\frac{5}{2}}\|x-\tilde x\|^2 
    \\&\quad+ d(1 + \|x - \tilde x \|^2)^{-\frac{3}{2}}. %\langle \nabla_{x} k(\cdot, x), \nabla_{\tilde x} k(\cdot, \tilde x) \rangle_{\mathcal{H}^d}
\end{align*}
Hence, for a fixed $\tilde x$,
\begin{itemize}
    \item $k(x, \tilde x)$ is  $O(\|x - \tilde x\|^{-1})$,
    \item $\| \nabla_x k(x, \tilde x)\|$ is  $O(\| x - \tilde x\|^{-2})$ ,
    \item $ \nabla_{x} \cdot \nabla_{\tilde x} k(x, \tilde x) \geq \min(-3 + d, 0)$. %\langle \nabla_{x} k(\cdot, x), \nabla_{\tilde x} k(\cdot, \tilde x) \rangle_{\mathcal{H}^d}
\end{itemize}
% This implies that, for a fixed $\tilde x$, $h_p(x, \tilde x)$ is bounded as soon as $\|s_p(x)\|$ is $o(\|x\|)$ for all $x$.

In order to explicitly obtain a bound, we work with each of the terms (i) $|\langle s_p(x), s_p(\tilde x) \rangle_{\R^1} k(x, \tilde x)| \leq  \|s_p(x)\|\| s_p(\tilde x) \| | k(x, \tilde x)|$, (ii) $\langle s_p(\tilde x), \nabla_{x} k(x, \tilde x) \rangle_{\R^d} + \langle s_p(x), \nabla_{\tilde x} k(x, \tilde x) \rangle_{\R^d}$, (iii) $ \nabla_{x} \cdot \nabla_{\tilde x} k(x, \tilde x)$ separately. For term (i), we upper bound $\|s_p(x)\|$ by $\gamma(\|x - \tilde x\|) + \gamma'(\tilde x)$, where $\gamma$ and $\gamma'$ are appropriate functions. For term (ii), we note that 
\begin{align*}
    &\langle s_p(\tilde x), \nabla_{x} k(x, \tilde x) \rangle_{\R^d} + \langle s_p(x), \nabla_{\tilde x} k(x, \tilde x) \rangle_{\R^d} 
    \\= &\langle s_p(\tilde x) - s_p(x), -\nabla_{x} k(x, \tilde x) \rangle_{\R^d}
    \\= &\langle s_p(\tilde x) - s_p(x), (1 + \| x - \tilde x \|^2)^{-\frac{3}{2}}(x - \tilde x)\rangle_{\R^d}
    \\= &(1 + \| x - \tilde x |^2)^{-\frac{3}{2}}\langle s_p(\tilde x) - s_p(x), x - \tilde x\rangle_{\R^d},
\end{align*}
so it suffices to upper bound work with $s_p(\tilde x) - s_p(x)$ and upper bound $\|s_p(\tilde x) - s_p(x)\|$ by $\Gamma(\|x - \tilde x\|) + \Gamma'(\tilde x)$, where $\Gamma$ and $\Gamma'$ are again appropriate functions. Note that (iii) has already been lower bounded. 

The choices of $\gamma, \gamma', \Gamma, \Gamma'$ will become clear in the next subsection, where we provide specific examples.  We highlight that there should exist sensible choices as soon as $\|s_p(x)\| k(x, \tilde x)$ and $\|s_p(x) - s_p(\tilde x) \| \| \nabla k(x, \tilde x) \|$ are both $O(1)$ for fixed $\tilde x$, i.e. $\|s_p(x)\| = O(\|x\|)$ in the case of the IMQ kernel. In addition to the cases considered in the next subsection, which include Gaussian distributions and restricted Boltzmann machines, a variety of models may fulfill this condition. In particular, exponential families with canonical parameters $\eta_i$ and canonical observations $T_i$ of the form $p_\theta(x) \propto \exp (\sum_{i \leq m} \eta_i(\theta) T_i(x) )$ fall under this umbrella if $\| \nabla T_i(x)\| = O(\|x\|)$ for all $i \leq m$. For instance, exponential graphical models with linear interactions between nodes, which have found applications in protein signaling networks (see, e.g., \citet[Section 4.2.2.]{yang2015graphical}) and spatial models (see, e.g.,  \citet[Section 4.2.2.]{besag1986statistical} and \citet[Section 4.1.]{besag1974spatial}) among others, attain such an assumption. Kernel exponential family models \citep{canu2006kernel} are another prominent example, which fulfill the condition as long as the gradients of the basis functions of a finite-rank approximation (as considered in \citet{matsubara2022robust}) are $O(\| x \|)$ (which is precisely the case in \citet{matsubara2022robust}). 

\subsection{Specific Examples of Bound Derivations} \label{sec:specific_derivations}

We present specific instances of bound derivations in order to elucidate the use of the general approach. The bounds obtained here will be later used in the experiments displayed in Section \ref{sec:experiments}.

\textbf{Gaussian distribution: } Let us consider $P_\theta = \mathcal{N}(\theta, 1)$, i.e., a normal distribution with mean $\theta$ and unit variance, and $k$ to be the inverse multi-quadratic kernel. Given that $s_\theta(x) = - (x - \theta)$, we first derive 
\begin{align*}
    |\langle s_\theta(x), s_\theta(\tilde x) \rangle k(x, \tilde x)| &\leq | x - \theta | | \tilde x - \theta | k (x, \tilde x) 
    \\ &\leq \lp | x - \tilde x| + | \tilde x -  \theta |\rp | \tilde x - \theta | 
    \\&\quad\times k (x, \tilde x)
    \\&\stackrel{}{\leq} |\tilde x - \theta| \lp 1 + | \tilde x - \theta |\rp,
\end{align*}
where the last inequality can be easily derived by separately considering the two cases $|x - \tilde x| \leq 1$ and $|x - \tilde x| > 1$. Secondly, 
\begin{align*}
    &(1 + | x - \tilde x |^2)^{-\frac{3}{2}}\langle s_p(\tilde x) - s_p(x), x - \tilde x\rangle 
    \\= &-(1 + | x - \tilde x |^2)^{-\frac{3}{2}} | x - \tilde x | \in [-1, 0].
\end{align*}

For a fixed $\tilde x$, it thus follows that i) $|\langle s_\theta(x), s_\theta(\tilde x) \rangle k(x, \tilde x)| \leq |\tilde x - \theta | \{ 1 + | \tilde x - \theta |\}$, ii) $\langle s_\theta(\tilde x), \nabla_{x} k(x, \tilde x) \rangle + \langle s_\theta(x), \nabla_{\tilde x} k(x, \tilde x) \rangle \in [-1, 0]$, iii) $ \nabla_{x} \cdot \nabla_{\tilde x} k(x, \tilde x) \geq -2$. 

Consequently, it suffices to define $M_\theta(\tilde x) = |\tilde x - \theta | \{ 1 + | \tilde x - \theta |\} + 3$. Note that $\E[M_{\theta}(X)] < \infty$ given that the first two moments of a Gaussian distribution are finite.

\textbf{Intractable model:} Following the examples in \citet{liu2019fisher} and \citet{matsubara2022robust}, we consider the intractable model with density $p_\theta(y) \propto \exp(\theta_1 \tanh x_1 + \theta_2 \tanh x_2 - 0.5 \|x\|^2)$, where $\theta = (\theta_1, \theta_2)\in \R^2$, $x \in \R^3$. For $\theta = 0$, we recover the density of a Gaussian distribution $\mathcal{N}(0, I_3)$.  Given that
\begin{align*}
    s_\theta(x) = \lp\theta_1(1 - \tanh^2x_1), \theta_2(1 - \tanh^2x_2), 0\rp^T - x,
\end{align*}
we first derive
\begin{align*}
    &\| \lp\theta_1(1 - \tanh^2x_1), \theta_2(1 - \tanh^2x_2), 0\rp^T - x \| 
    \\ \leq & \| \theta \|+\| \tilde x - x \|
    \\&+ \| \tilde x - \lp\theta_1(1 - \tanh^2\tilde x_1), \theta_2(1 - \tanh^2\tilde x_2), 0\rp^T  \| 
    \\\leq &\| \theta \| + \| s_\theta(\tilde x) \| + \| \tilde x - x \|,
\end{align*}
and so $|\langle s_\theta(x), s_\theta(\tilde x) \rangle k(x, \tilde x)| $ is upper bounded by 
\begin{align*}
    & \| s_\theta(x) \| \| s_\theta(\tilde x) \| k (x, \tilde x)
    \\ \leq &\lp\| \theta \| + \| s_\theta(\tilde x) \| + \| \tilde x - x \|\rp \| s_\theta(\tilde x) \| k (x, \tilde x)
    \\\stackrel{}{\leq}&\lp\| \theta \| + \| s_\theta(\tilde x) \| + 1\rp \| s_\theta(\tilde x) \| .
\end{align*}
Secondly, $ \| s_p(\tilde x) - s_p(x) \| \leq \| \theta \|  + \|x - \tilde x\|$,
and so 
\begin{align*}
    &\lba (1 + \| x - \tilde x \|^2)^{-\frac{3}{2}}\langle s_p(\tilde x) - s_p(x), x - \tilde x\rangle \rba \\\leq &(1 + \| x - \tilde x \|^2)^{-\frac{3}{2}} \| x - \tilde x \| \lp \| \theta \|  + \|x - \tilde x\| \rp 
    \\\leq &\| \theta \|  + 1.
\end{align*}

For a fixed $\tilde x$, it thus follows that i) $|\langle s_\theta(x), s_\theta(\tilde x) \rangle k(x, \tilde x)| \leq \lp\| \theta \| + \| s_\theta(\tilde x) \| + 1\rp \| s_\theta(\tilde x) \|$, ii) $|\langle s_\theta(\tilde x), \nabla_{x} k(x, \tilde x) \rangle + \langle s_\theta(x), \nabla_{\tilde x} k(x, \tilde x) \rangle | \leq   \| \theta \|  + 1$, iii) $ \nabla_{x} \cdot \nabla_{\tilde x} k(x, \tilde x) \geq 0$.

We thus define $M_\theta(\tilde x) = \lp\| \theta \| + \| s_\theta(\tilde x) \| + 1\rp \| s_\theta(\tilde x) \| + \| \theta \|  + 1$. Note that $\E[M_{\theta}(X)] < \infty$ given that the distribution has Gaussian type tails, and so its first two moments are finite.

\textbf{Gaussian-Bernoulli Restricted Boltzmann Machine:} We consider $P$ to be a Gaussian-Bernoulli Restricted Boltzmann Machine, following related contributions in the literature \citep{liu2016kernelized, schrab2022ksd}. It is a graphical model that includes a binary hidden variable $h$, taking values in $\{1, -1\}^{d_h}$, and a continuous observable variable $x$ within $\mathbb{R}^d$. These variables are linked by the joint density function
\begin{align*}
    p(x, h) = \frac{1}{Z} \exp\lp \frac{1}{2}x^TBh + b^Tx + c^Th - \frac{1}{2} \|x\|_2^2 \rp,
\end{align*}
where $Z$ is the normalizing constant. It follows that the density $p$ of $x$ is
\begin{align*}
    p(x) = \sum_{h \in \{ - 1, 1\}^{d_h}} p(x, h).
\end{align*}
The computation of $p$ for large dimension $d_h$ is intractable; nonetheless, the score function is computable as
\begin{align*}
    s_p(x) = b - x + \frac{B}{2} \phi(\frac{B^Tx}{2} + c), \quad \phi(y) = \frac{e^{2y} - 1}{e^{2y} + 1}.
\end{align*}

We first derive 
\begin{align*}
    \left\| \frac{B}{2} \phi(\frac{B^Tx}{2} + c) - \frac{B}{2} \phi(\frac{B^T\tilde x}{2} + c)\right\| \stackrel{(i)}{\leq} \| B \|_{op}\sqrt{d_h},
\end{align*}
where (i) is obtained given that $\phi(y) \in [-1, 1]^{d_h}$ for all $y$. Thus $|\langle s_p(x), s_p(\tilde x) \rangle k(x, \tilde x)| \leq \| s_p(x) \| \|s_p(\tilde x) \| k (x, \tilde x) $ is upper bounded by $\lp \| s_p(\tilde x)\| + \| \tilde x -  x \| + \left\| \frac{B}{2} \phi(\frac{B^Tx}{2} + c) - \frac{B}{2} \phi(\frac{B^T\tilde x}{2} + c)\right\|\rp$ $\times\| s_p(\tilde x) \| k (x, \tilde x)$, which is again upper bounded by 
\begin{align*}
\lp \| s_p(\tilde x)\| +1 + \| B \|_{op}\sqrt{d_h}\rp \| s_p(\tilde x) \|.
\end{align*}

Secondly, $\| s_p(\tilde x) - s_p(x) \| \leq \| B \|_{op}\sqrt{d_h}  + \|x - \tilde x\|$, and so $ \lba (1 + \| x - \tilde x \|^2)^{-\frac{3}{2}}\langle s_p(\tilde x) - s_p(x), x - \tilde x \rangle \rba$ is upper bounded by 
\begin{align*}
    &(1 + \| x - \tilde x \|^2)^{-\frac{3}{2}} \| x - \tilde x \| \lp \| B \|_{op}\sqrt{d_h}  + \|x - \tilde x\| \rp 
    \\\leq &\| B \|_{op}\sqrt{d_h}  + 1.
\end{align*}

Lastly, we have that $\langle \nabla_{x} k(\cdot, x), \nabla_{\tilde x} k(\cdot, \tilde x) \rangle_{\mathcal{H}^d} \geq 0$. Hence it suffices to define $M_p(\tilde x) = \lp \| s_p(\tilde x)\| +1 + \| B \|_{op}\sqrt{d_h}\rp \| s_p(\tilde x) \| + \| B \|_{op}\sqrt{d_h}  + 1$. We can further upper bound $\| B \|_{op} \leq \| B \|_{fr}$, with the Frobenius norm being easily computable. Note that $\E[M_{p}(X)] < \infty$ given that the distribution has Gaussian type tails, and so its first two moments are finite.

\section{EXPERIMENTS} \label{sec:experiments}

\begin{figure*}[t!] 
  \centering
  \includegraphics[width=.9\textwidth]{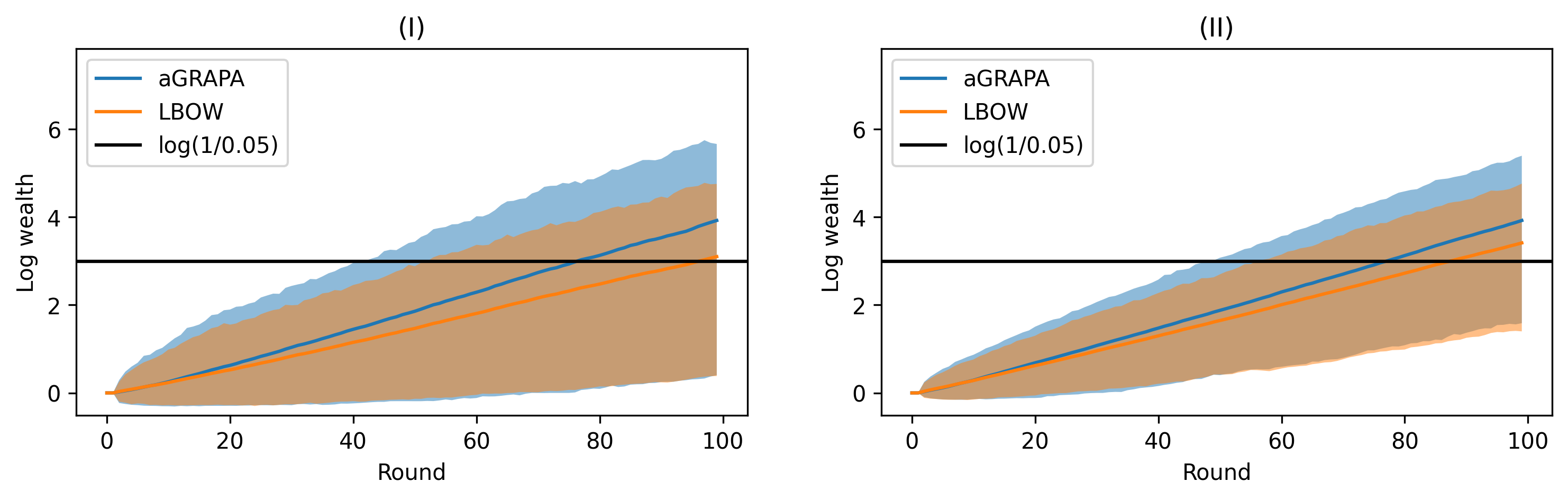}
  \caption{Average logarithmic wealth alongside $95\%$ empirical confidence intervals for $1000$ simulations under the alternative for (I) the Gaussian distribution, (II) the intractable model. We highlight the exponential growth of the wealth.} % with $\theta_0 = 1$ and $\theta_1, with $\theta_0 = (0, 0)$ and $\theta_1 = (1, 1)$. = 1$
  \label{fig:the_four}
\end{figure*}

%We present here the empirical performance of the test. 
We consider the distributions introduced in the previous section, using the upper bounds derived therein. \textbf{Gaussian distribution:} We take $\theta_0 = 0$ under the null. \textbf{Intractable model:} We take $\theta_0 = (0, 0)$ under the null. We defer the restricted Boltzmann machine example to Appendix \ref{appendix:rbm_experiments}.  The code may be found \href{https://github.com/DMartinezT/sequential_ksd}{here}.

Figure \ref{fig:the_four} exhibits the performance of the proposed test with $\alpha = 0.05$,  under the alternatives $\theta_1 = 1$ and $\theta_1 = (1, 1)$ respectively. We emphasize the exponential growth of the wealth process. This behaviour is expected, in view of Theorem \ref{theorem:ksd_simple_power_lbow} and the fact that all these examples fulfil regularity conditions so that $\E_{H_1}[g^*(X)] > 0$;  an actual empirical verification of the lower bound derived in Theorem \ref{theorem:ksd_simple_power_lbow} is deferred to Appendix \ref{section:experiment_lower_bound}. Furthermore, we stress that aGRAPA empirically outperforms LBOW. This is due to the fact that $\lambda_{t+1}^{\text{aGRAPA}} > \lambda_{t+1}^{\text{LBOW}}$, so aGRAPA bets more aggressively. We defer the illustration of the type-I error control at the desired level $0.05$ to  Appendix~\ref{section:type1errorcontrol}, a discussion on how the tightness of the bounds $M_p$ empirically affects the power of the test to Appendix~\ref{section:tightness_bound_statistical_power}, and examples of testing composite null hypotheses to Appendix~\ref{section:composite_null_hypotheses}.   % We defer to Appendix \ref{appendix:batch_setting} a discussion concerning the empirical performance of the proposed test and the fixed sample size kernelized Stein discrepancy test, and to Appendix 

\begin{figure*}[ht] 
    \centering
  \includegraphics[width=.9\textwidth]{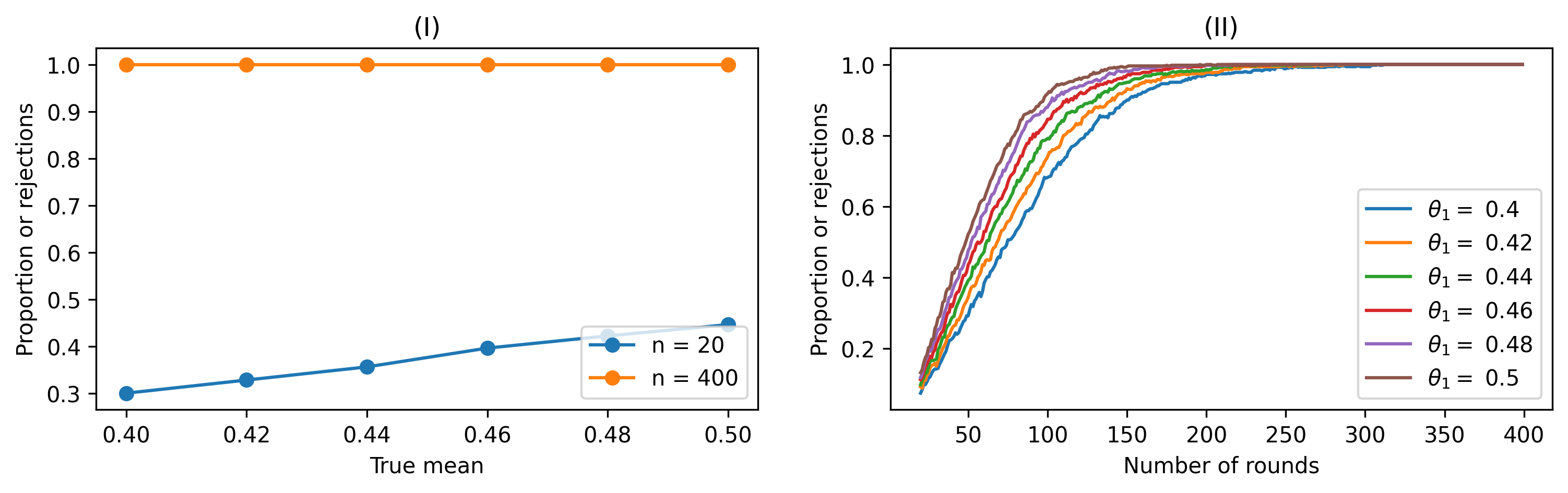}
  \caption{Proportion of rejections for the Gaussian distribution considered in Section \ref{sec:experiments} under the alternatives $\theta_1 \in \{ 0.40, 0.42, 0.44, 0.46, 0.48, 0.50 \}$ for (I) the (classical) batch setting kernelized Stein discrepancy with sample size $n$, (II) the (proposed) sequential kernelized Stein discrepancy. The (proposed) sequential test always ends up rejecting the null hypotheses, while the batch test will not do so if the original sample size is too small.}
  \label{fig:batch}
\end{figure*}

We emphasize once again that, in contrast to sequential procedures, batch setting algorithms implicitly commit to a sample size that is chosen prior to running the experiment, which may lead to substantially suboptimal choices of sample sizes. We exhibit such a phenomenon in Figure \ref{fig:batch}, which displays the proportion of rejections for the Gaussian distribution under different alternatives and illustrates the convenience of employing anytime valid tests. If using the classical kernelized Stein discrepancy test (left plot), $20$ observations lead to rejection rates lower than $0.50$, and $400$ observations show rejection rates of $1$. This implies that $20$ observations are not enough to obtain a powerful test, and probably less than $500$ observations would have sufficed to yield high power (resulting in a cheaper experiment). However, this cannot be known beforehand. In stark contrast, all the null hypotheses are eventually rejected by the proposed test (right plot), with the empirical powers being significantly high already after $150$ observations, and always being able to collect more data and keep running the experiment with anytime validity. We defer a longer comparison between the batch and sequential tests to Appendix~\ref{appendix:batch_setting}.

\section{CONCLUSION} \label{sec:conclusion}

We have developed a sequential version of the kernelized Stein discrepancy goodness-of-fit test, which gives way to goodness-of-fit tests that can handle distributions with unknown normalizing constants and can be continuously monitored and adaptively stopped. We have done so by combining tools from testing by betting with RKHS theory, while avoiding assuming uniform boundedness of the Stein reproducing kernel. We have proved the validity of the test, as well as exponential growth of the wealth process under the alternative and mild regularity conditions. Our experiments have exhibited the empirical performance of the test in a variety of scenarios.

In this contribution, we have presented a novel martingale construction that does neither exploit nor need uniform boundedness of the kernel. While the theory presented here has been developed for and motivated by the Stein kernel, such a martingale construction may be exploited by any kernel that is either unbounded or uniformly bounded by a constant that is too loose. This opens the door to develop sequential two sample and independence tests with kernels that are currently unexplored, which conforms an exciting direction of research.

\subsubsection*{Acknowledgements}
DMT gratefully acknowledges that the project that gave rise to these results received the support of a fellowship from `la Caixa' Foundation (ID 100010434). The fellowship code is LCF/BQ/EU22/11930075. AR was funded by NSF grant DMS-2310718. The authors would also like to thank the anonymous reviewers for their valuable comments and suggestions.

\bibliographystyle{apalike}  
\bibliography{references}

 \clearpage
\appendix
 \onecolumn
\section{ADDITIONAL EXPERIMENTS}

\subsection{Logarithmic wealth process of a Gaussian-Bernoulli restricted Boltzmann machine} \label{appendix:rbm_experiments}

In Section \ref{sec:experiments}, we displayed the performance of the proposed test for the Gaussian and intractable models. The performance of the proposed test for the remaining model introduced in Section \ref{sec:derivation}, a Gaussian-Bernoulli restricted Boltzmann machine, is now exhibited in Figure \ref{fig:rbms}. In particular, we take $d_h = 10$, and $d = 50$. We sample from it using Gibbs sampling with a burn-in of 1000 iterations. Under the null, we take $b = 0$, $c = 0$, and matrix $B$ such that $B_{ij}$ is one if visible node $i$ is connected to hidden node $j$, and zero otherwise, with each hidden node connected to five visible nodes. We consider two different alternatives. For one of them, each entry of $B$ is shifted by $0.5$. For the other one, $b = 1$.

\begin{figure*}[ht] 
  \centering
  \includegraphics[width=\textwidth]{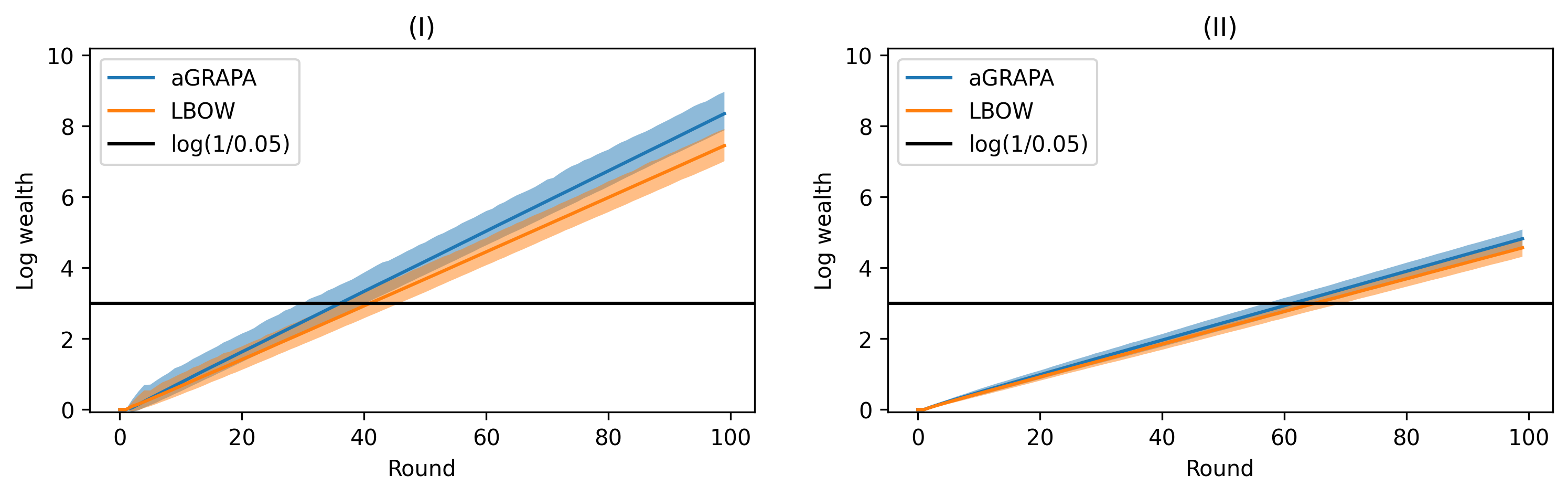}
  \caption{Average logarithmic wealth alongside $95\%$ empirical confidence intervals for $1000$ simulations under the alternative for (I) the restricted Boltzmann machine with shifted $B$, (II) the restricted Boltzmann machine with bias $b=1$. We highlight once again the exponential growth of the wealth processes.} % with $\theta_0 = 1$ and $\theta_1, with $\theta_0 = (0, 0)$ and $\theta_1 = (1, 1)$. = 1$
  \label{fig:rbms}
\end{figure*}

\subsection{Type-I error control} \label{section:type1errorcontrol}

In Section~\ref{sec:experiments}, we illustrated the empirical power of the proposed test. We devote this appendix to displaying the empirical type-I error control of the proposed test.  Figure~\ref{fig:validity_raw_the_three_plots} exhibits the wealth processes for the examples considered in Section~\ref{sec:experiments} under the null. We highlight that the wealth processes do not cross the threshold $1/0.05$, and hence the nulls are not rejected. 

\begin{figure}[b!] 
\centering
  \includegraphics[width=\textwidth]{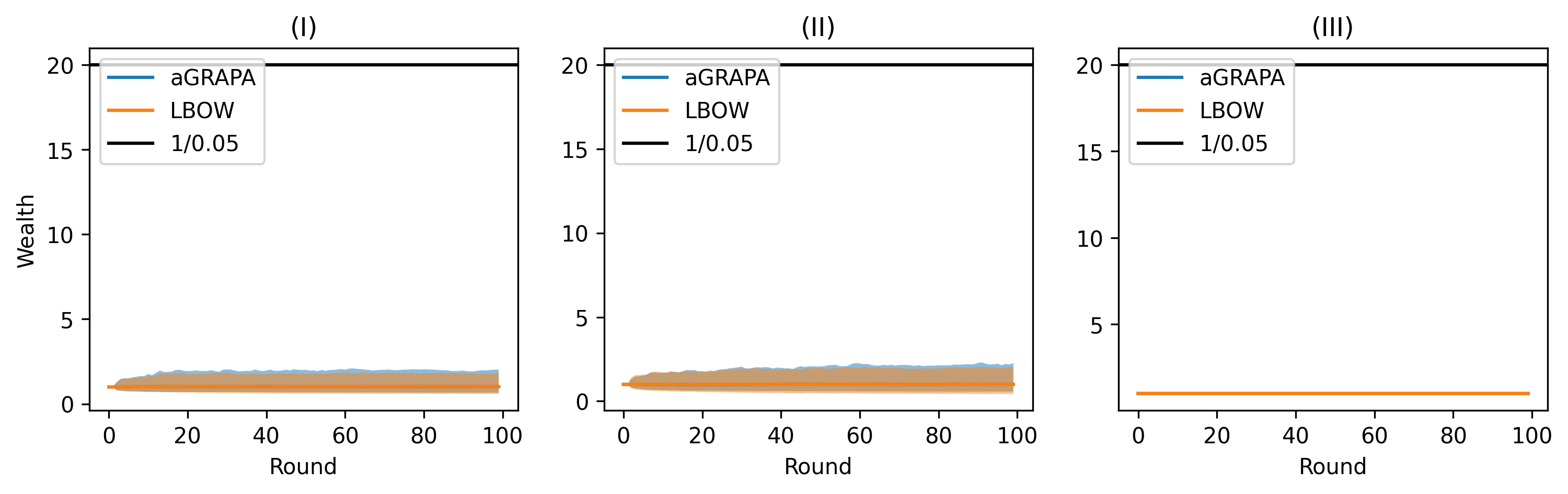}
  \caption{Average wealth alongside $95\%$ empirical confidence intervals for $1000$ simulations under the null for (I) the Gaussian distribution, (II) the intractable model, (III) the restricted Boltzmann machine. We emphasize that the wealth processes do not cross the threshold $1/0.05$, and hence the nulls are not rejected, showing the empirical type-I error control.}
  \label{fig:validity_raw_the_three_plots}
\end{figure}

We note that Figure~\ref{fig:validity_raw_the_three_plots} exhibits the anytime validity of the tests until round $100$ (the same number of rounds as considered in Section~\ref{sec:experiments}). However, we can verify anytime validity without committing to a sample size for those simulations with known normalizing constants.  More specifically, let $P$ be the distribution under the null, and $Q$ any other distribution such that $dP/dQ$ is known and we can sample from $Q$. Let $\tau$ denote the random stopping time. Since $\mathbb{P}_P(\tau < \infty) = E_{P}[1(\tau < \infty)] = E_{Q}[dP/dQ 1(\tau < \infty)]$, instead of sampling from $P$ and calculating $\mathbb{P}_P(\tau < \infty)$ by Monte-Carlo, we can instead sample from $Q$ and calculate $E_{Q}[dP/dQ 1(\tau < \infty)]$ by Monte-Carlo. For $Q\neq P$, the procedure will stop relatively fast (since the wealth grows exponentially fast under the alternative), and we can average these likelihood ratios at the stopping time to estimate the type 1 error. For simulations in which we know the normalizing constants (even though we may not want to use them in order to test our methods), this importance sampling technique for estimating the type-1 error works well. For the Gaussian case considered in Section~\ref{sec:experiments} (i.e., $P$ is a standard Gaussian distribution), taking $Q$ to be a Gaussian distribution with unit variance centered at $0.5$ leads to an estimated  $\mathbb{P}_P(\tau < \infty)$ of $0.0006$ for $\alpha = 0.1$ (showing, again, the empirical anytime validity of the proposed test). 

\subsection{Comparison to batch setting} \label{appendix:batch_setting}

We illustrated in Section \ref{sec:experiments} some of the advantages of the sequential test over the batch test. We emphasize that the proposed test is anytime valid, counting with the subsequent substantial advantages that have been discussed in the main body of the work. Thus, it is expected to be outperformed in the batch setting by algorithms that are tailored to fixed sample sizes. Nonetheless, it is of interest to explore how the proposed test compares with the more classical kernel Stein discrepancy test. We present in this appendix the empirical performance of both the sequential and fixed sample size algorithms. 

Figure \ref{fig:batch_appendix} exhibits the proportion of rejections for the Gaussian distribution considered in Section \ref{sec:experiments} with different alternatives. As expected, the classical batch setting test outperforms the proposed test for fixed sample sizes, with the proportions of rejections of the former being always larger than those of the latter. Nonetheless, the figure illustrates the convenience of employing anytime valid tests. If using the classical kernel Stein discrepancy test, $50$ observations suffice to reject the null hypothesis for every sample in the right plot, but they only allow to reject for around $80\%$ of the samples in the left plot. The null hypothesis cannot be ever rejected for the samples encompassed in the remaining $20\%$: based on the interpretation of p-values, it is not possible to keep gathering evidence to rerun the test. In stark contrast, all the null hypotheses are eventually rejected by the proposed test, being always able to collect more data and keep running the experiment with anytime validity.

\begin{figure}[ht] 
  \includegraphics[width=\textwidth]{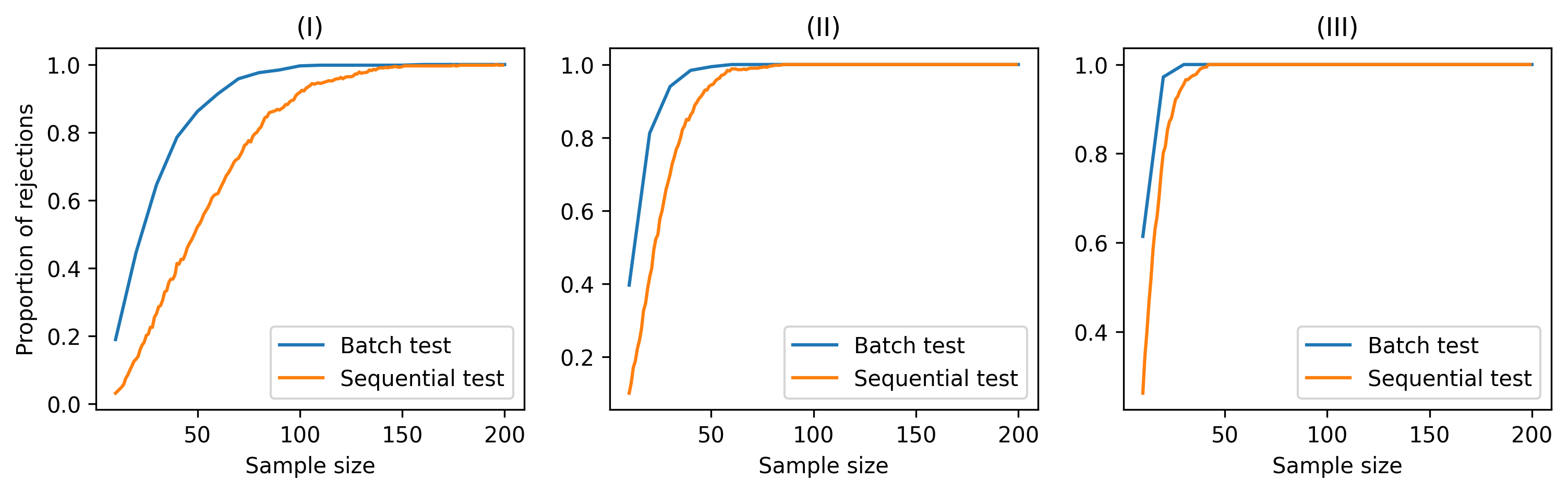}
  \caption{Proportion of rejections for $500$ simulations for the Gaussian distribution considered in Section \ref{sec:experiments} under three different alternatives with (I) $\theta = 0.5$, (II) $\theta = 0.75$, (III) $\theta = 1$. We emphasize that the (proposed) sequential test always ends up rejecting the null hypotheses, while the batch test will not do so if the original sample size is too small.}
  \label{fig:batch_appendix}
\end{figure}

We would like to emphasize the lack of anytime validity of the classical, batch setting test. Figure \ref{fig:non_validity} displays the proportion of rejections for both the proposed sequential test, and the batch test run sequentially (once every time a new observation is observed). Not surprisingly, the batch test rejection rates go well above the desired $\alpha = 0.05$ type-I error. This example illustrates the lack of anytime validity of classical test.

\begin{figure}[ht] 
  \includegraphics[width=\textwidth]{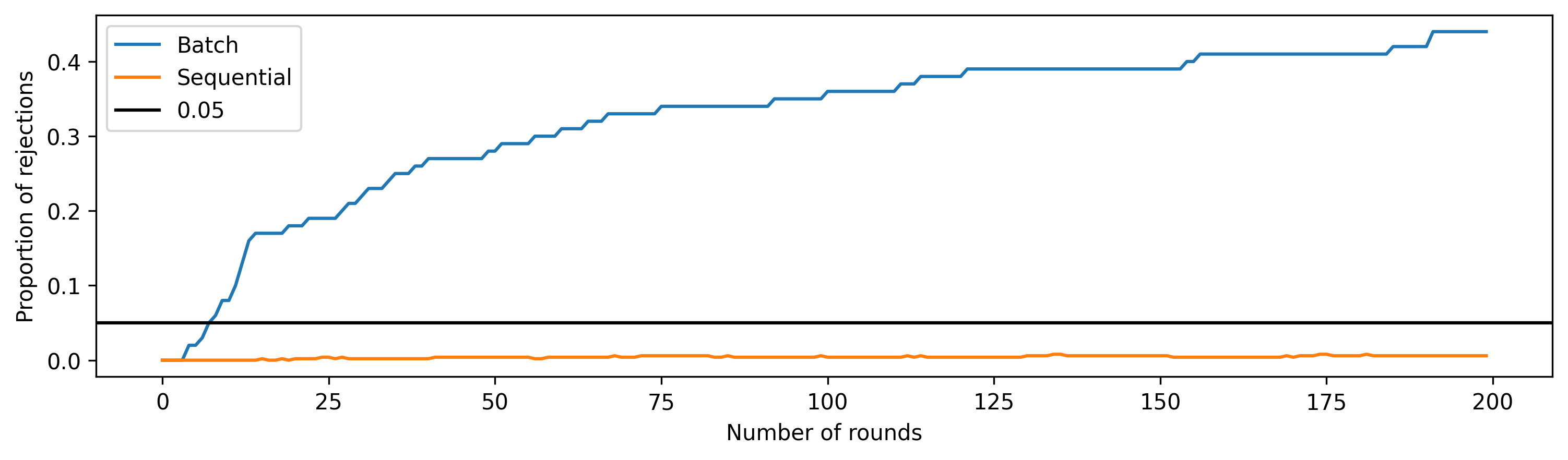}
  \caption{Proportion of rejections for $100$ simulations for the Gaussian distribution considered in Section \ref{sec:experiments} under the null for the proposed sequential test, and the batch test run sequentially (once every time a new observation is observed). While the sequential test preserves anytime valid guarantees, the `sequentialized' batch test lacks type-I error control.}
  \label{fig:non_validity}
\end{figure}

\subsection{Composite null hypotheses} \label{section:composite_null_hypotheses}

Section~\ref{section:composite_null_hypothesis} presented the extension of the proposed test to composite null hypotheses. We devote this appendix to illustrating the validity and power of such an extension. Let the null hypothesis be $H_0: Q \in \mathcal{P}$ against $H_1: Q \not\in\mathcal{P}$. Throughout, the null hypotheses will be combinations of the Gaussian-Bernoulli restricted Boltzmann machines (RBMs) from Section~\ref{sec:derivation}. 

In particular (and analogously to Appendix~\ref{appendix:rbm_experiments}), we take $d_h = 10$, and $d = 50$. We sample from such RBMs using Gibbs sampling with a burn-in of 1000 iterations. Throughout, $Q$ is taken as an RBM with $b = 0$, $c = 0$, and matrix $B$ such that $B_{ij}$ is one if visible node $i$ is connected to hidden node $j$, and zero otherwise, with each hidden node connected to five visible nodes. For two different experiments, we consider the two null hypotheses $\mathcal{P} = \{ Q, Q_{a,1} \}$ and $\mathcal{P} = \{ Q_{a,1}, Q_{a,2} \}$, where $Q_{a,1}$ is an RBM with each entry of $B$ is shifted by $0.5$ and $Q_{a,2}$ is taken with $b = 1$. Figure~\ref{fig:RBM_two_different_composite} displays the logarithmic wealth processes for these two scenarios. Note that the logarithmic wealth process never crosses the threshold $\log(1/0.05)$ in the first case (showing the empirical anytime validity of the test), while the latter case shows comparable power to those experiments presented in Section~\ref{appendix:rbm_experiments}.

\begin{figure}[ht] 
  \includegraphics[width=\textwidth]{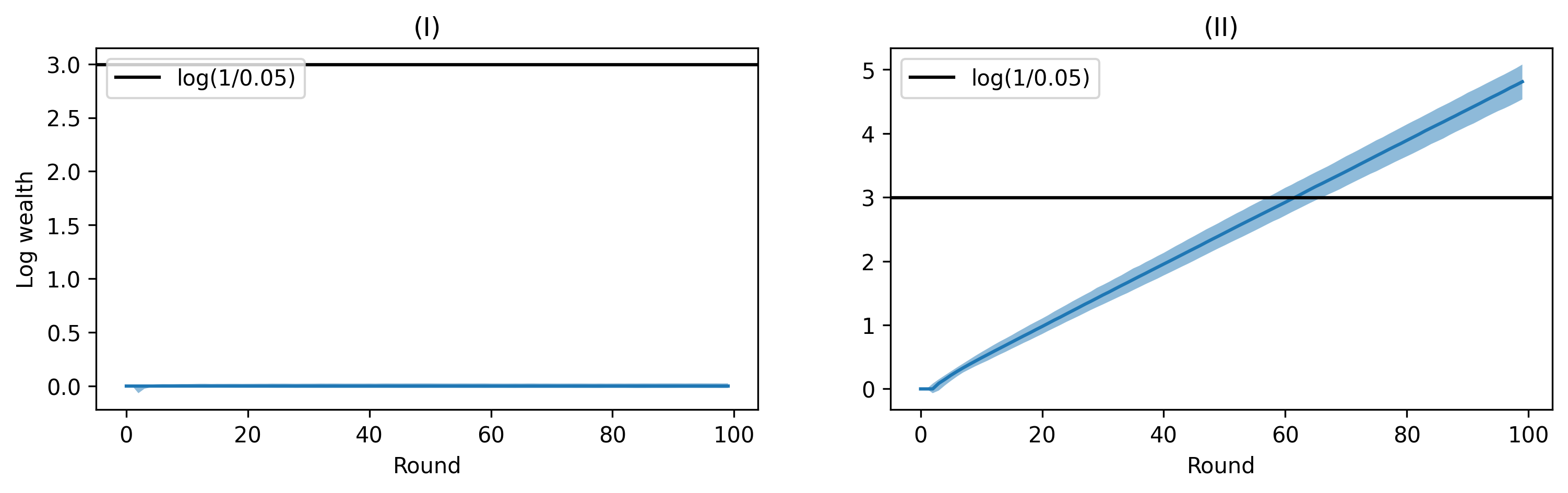}
  \caption{Average logarithmic wealth for $1000$ simulations for the composite null hypothesis (I) $\mathcal{P} = \{ Q, Q_{a,1} \}$, (II) $\mathcal{P} = \{ Q_{a,1}, Q_{a,2} \}$, where $Q$ is the RBM defined in Section~\ref{sec:derivation} with $b = 0$, $c = 0$, and matrix $B$ such that $B_{ij}$ is one if visible node $i$ is connected to hidden node $j$, and zero otherwise, with each hidden node connected to five visible nodes; $Q_{a,1}$ is the same RBM with each entry of $B$ is shifted by $0.5$, and $Q_{a,2}$ is taken with $b = 1$.}
  \label{fig:RBM_two_different_composite}
\end{figure}

\subsection{On the tightness of the bounds and statistical power} \label{section:tightness_bound_statistical_power}

In Section~\ref{sec:derivation}, we carefully exhibited how to derive sensible bounds $M_p$ for different families of distributions. Here, we explore how the tightness of the bounds affects the statistical power of the test. Figure~\ref{fig:different_bounds} displays the wealth processes for the Gaussian distribution from Section~\ref{sec:derivation} under three different alternatives, using the original bound obtained in Section~\ref{sec:derivation} alongside looser bounds.  We highlight that all logarithmic wealth processes are linear (i.e., the wealth growth is exponential), but looser bounds lead to smaller slopes.

\begin{figure}[ht] 
\centering
  \includegraphics[width=\textwidth]{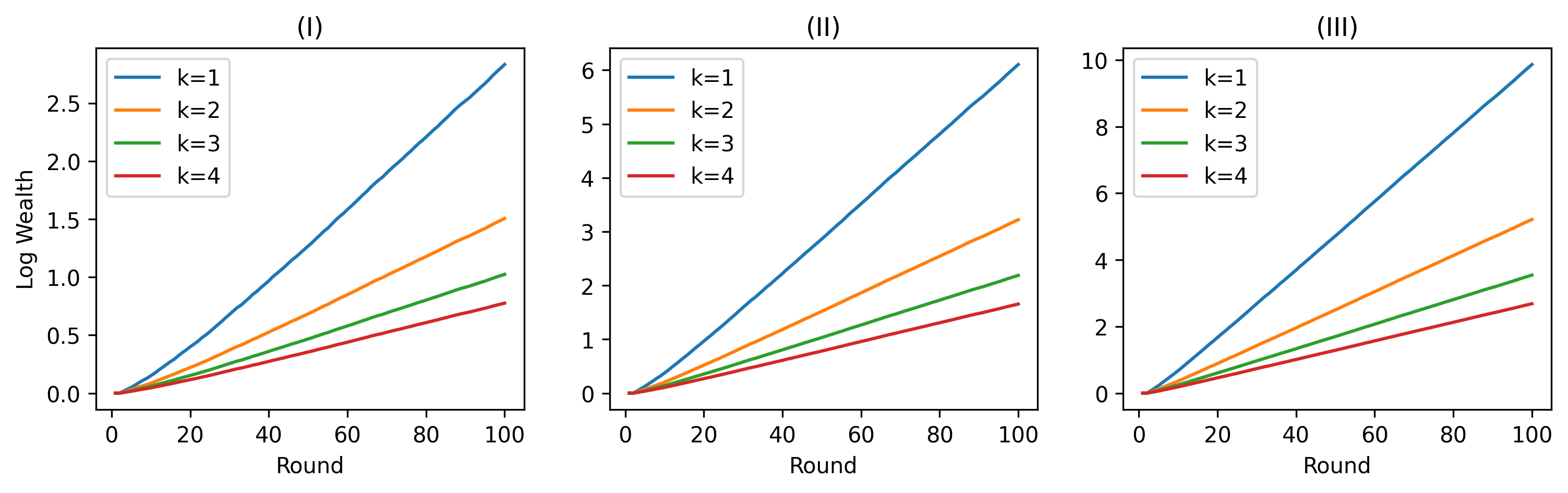}
  \caption{Average logarithmic wealth for $1000$ simulations for the Gaussian distribution when using bounds of the form $k \cdot M(X)$, where $M(X)$ is the original bound derived in Section~\ref{sec:derivation} and $k \in \{1, 2, 3, 4\}$, under the alternatives (I) $\theta_1 = 0.5$, (II) $\theta_1 = 0.75$, (III) $\theta_1 = 1$. It can be clearly seen that all the logarithmic wealth processes are linear (i.e., the wealth growth is exponential), but higher multiplicative factors $k$ (i.e., looser bounds) have smaller slopes.}
  \label{fig:different_bounds}
\end{figure}

\subsection{Empirical Verification of the Lower Bound of the Wealth} \label{section:experiment_lower_bound}

In Section \ref{sec:main_results}, we stressed that the stopping time $\tau$ of the proposed test  (i.e. $\tau$ is the smallest $t$ verifying $\wealth_t \geq 1/\alpha$) is roughly upper bounded by $\log(1/\alpha) / r^*$ (where $r^*$ is defined as in Theorem \ref{theorem:ksd_simple_power_lbow}). We devote this section to exhibit the empirical validity of this claim. Note that, while $r^*$ is unknown in practice, it can be easily estimated via a Monte Carlo approach when we have access to the ground truth distribution (as it is the case in our experimental settings), given that it only depends on the first and second moments of the $h_p$ and $M_p$ evaluations.

Figure \ref{fig:lower_bound} displays the stopping times $\tau$ of the proposed test  for the Gaussian setting (presented in Section \ref{sec:specific_derivations} and Section \ref{sec:experiments}) as a function of $r^*$, both for the LBOW and aGRAPA strategies. We take $\theta_0 = 0$ under the null, and a range of $\theta_1$ under different alternatives. Intuitively, the larger the distance between the means  $\theta_0$ and $\theta_1$ is, the larger the theoretical quantities $r^*$ become. Thus, this setting provides a range of $r^*$ for which we can study the empirical stopping times $\tau$. Figure \ref{fig:lower_bound} shows that the average stopping time curve, as well as its empirical $95\%$ confidence interval, are empirically dominated by the upper bound $\log(1/\alpha)/r^*$ 
 (and once more, we note that the aGRAPA strategy empirically outperforms the LBOW strategy). 

\begin{figure}[ht] 
\centering
  \includegraphics[width=.8\textwidth]{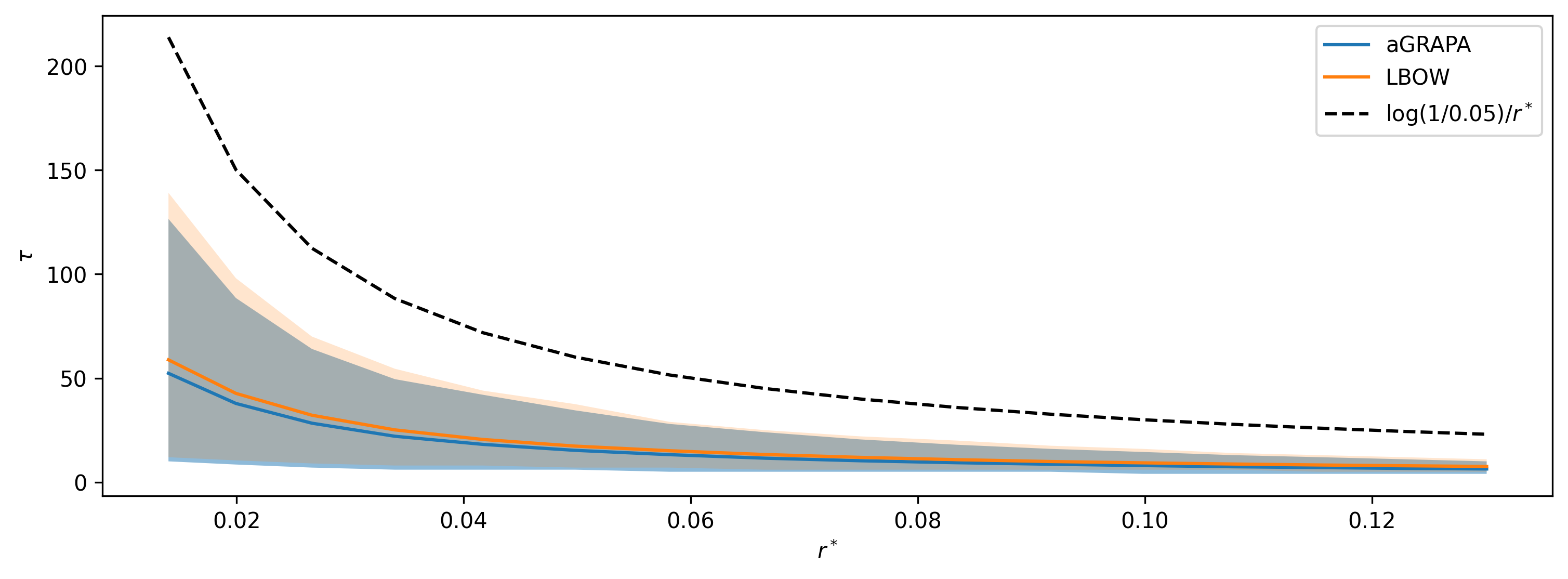}
  \caption{Average empirical stopping values $\tau$, alongside their $95\%$ empirical confidence intervals for $1000$ simulations, as a function of $r^*$ (defined in Theorem \ref{theorem:ksd_simple_power_lbow}) in the Gaussian setting, with $\alpha = 0.05$, across $500$ simulations, for the LBOW and aGRAPA strategies. We emphasize that all the stopping times are dominated by the approximate upper bound $log(1/\alpha) / r^*$.}
  \label{fig:lower_bound}
\end{figure}

\clearpage
\section{Comparison to related work} \label{appendix:comparison}

We present in this appendix a comprehensive comparison of our contribution and that of \citet{zhou2024sequential}, which also proposed a sequential KSD goodness-of-fit test that is based on a supermartingale construction and Ville's inequality.

The primary distinction between the contributions lies in the uniform boundedness assumption, which is adopted in \citet{zhou2024sequential} but deliberately avoided in our work. Specifically, their work assumed that the scores are uniformly bounded by 1 \citep[Assumption 2.4]{zhou2024sequential}. We would like to emphasize that this is, in principle, a rather restrictive assumption. For instance, even the simple case of a Gaussian distribution with an IMQ kernel does not satisfy this assumption. Thus, our method is strictly more general and can accommodate more scenarios, e.g., the three experiments presented in Section~\ref{sec:experiments} could not have been addressed using their procedure.

In principle, there could be other alternatives to work around unbounded score functions. For instance, one could work with tilted base kernels that make the Stein kernel bounded even if the score function is not bounded   \citep[Theorem 7]{barp2024targeted}. Nonetheless, current results such as \citet[Theorem 7]{barp2024targeted} still require bounds to `tilt' the base kernel, but do not provide such bounds (the authors' goal is solely to prove that boundedness of the Stein kernel may be assumed without loss of generality in their setting, without further interest in the bound itself).  One of the main contributions of our paper is to show how to obtain tight workable bounds, and illustrate that such derivations yield powerful tests. In order to potentially use \citet[Theorem 7]{barp2024targeted} to obtain sequential tests, one would have to carefully study the actual bounds derived from such a theorem. This would eventually lead to an analysis very similar to the one presented throughout this contribution.

The lack of uniform boundedness also conveys a number of deep theoretical challenges. The theoretical foundation of \citet{zhou2024sequential} is analogous to the one presented in \citet{podkopaev2023sequential}. See e.g. \citet[Theorem 2.5]{zhou2024sequential} and \citet[Theorem 2.4]{podkopaev2023sequential}, alongside their proofs. However, the lack of uniform boundedness makes such proofs break in (at least) two different places: (i) after applying the Cauchy-Schwarz inequality, see \citet[Equation (47)]{zhou2024sequential}, (ii) the analysis of the Online Newton Step (ONS) strategy for selecting betting fractions. In order to circumvent the above challenges, this contribution replaces the ONS strategy for aGRAPA/LBOW strategies and presents novel theoretical guarantees, which are exhibited in Appendix~\ref{appendix:auxiliary_results}.\footnote{Some of the arguments presented in Appendix~\ref{appendix:auxiliary_results} may be of independent interest. To the best of our knowledge, no other contribution in the `testing by betting' literature exploits outcomes that are lower bounded but not upper bounded.} Lastly, we have experimentally found that the aGRAPA and LBOW strategies proposed in our contribution yield substantially more powerful tests than the ONS strategy proposed in \citet{zhou2024sequential} in all the cases considered. As exhibited in Figure~\ref{fig:four_ons},  the average log wealth at round $100$ for aGRAPA roughly doubles the log wealth achieved by the ONS strategy for all the experiments.

\begin{figure}[ht] 
\centering
  \includegraphics[width=\textwidth]{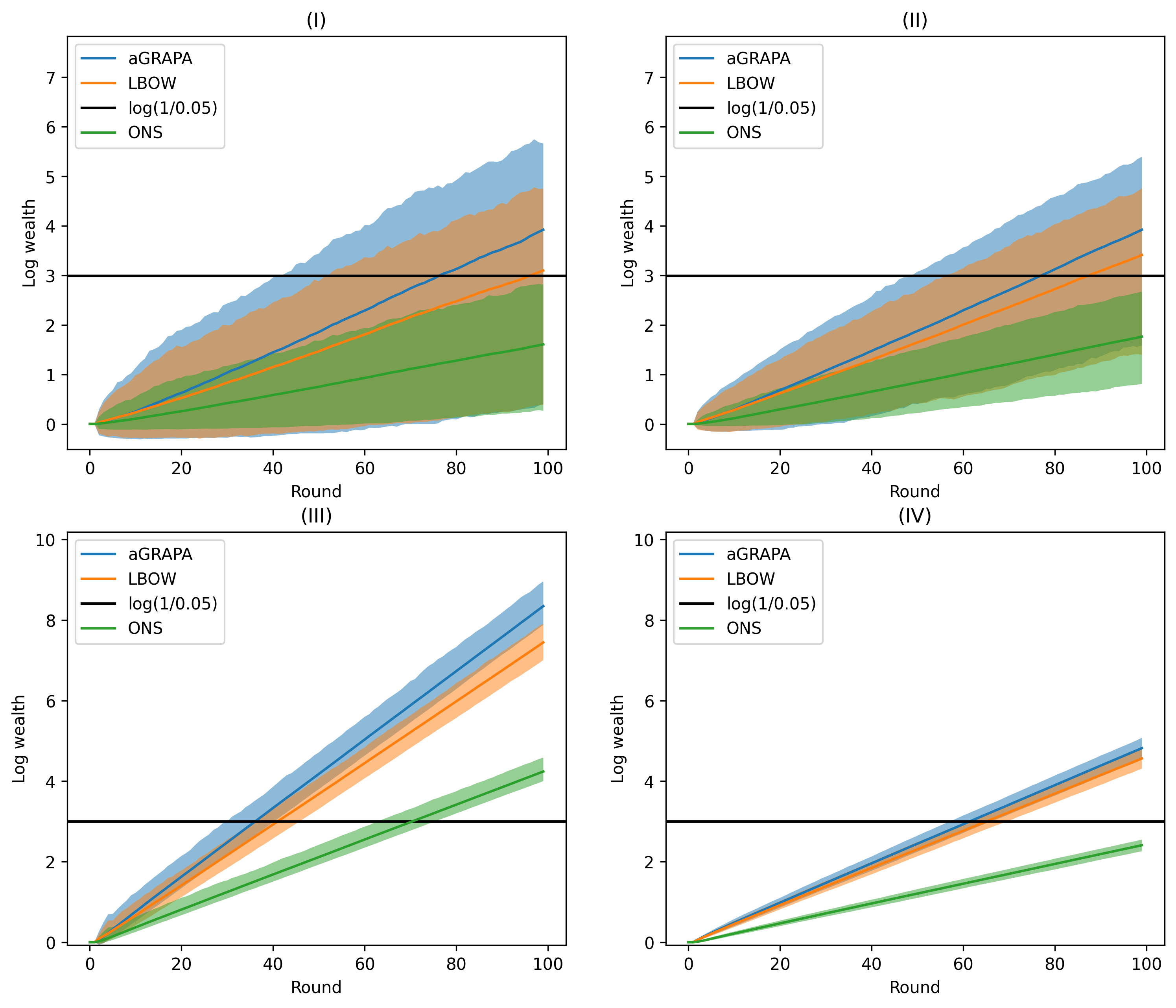}
  \caption{Average logarithmic wealth alongside $95\%$ empirical confidence intervals for $1000$ simulations under the alternative for (I) the Gaussian distribution, (II) the intractable model, (III) the restricted Boltzmann machine with shifted $B$, (IV) the restricted Boltzmann machine with bias $b=1$. We highlight that the logarithmic wealth achieved by ONS is roughly doubled by that obtained by the aGRAPA strategy.}
  \label{fig:four_ons}
\end{figure}
\clearpage
\section{PROOFS} \label{appendix:proofs}

\subsection{Notation}

Throughout, we denote 
\begin{align*}
    g_i(x) &= \frac{1}{\frac{1}{t-1}\sum_{k=1}^{i-1} M_{p}(X_k)}\left(\frac{1}{i-1} \sum_{k = 1}^{i-1} h_p(X_k, x)\right), \quad g^*(x) = \frac{1}{\E[M_p(X)]}\E[ h_p(X, x)],
    \\
    f_i(x) &= \frac{1}{i-1} \sum_{k = 1}^{i-1} h_p(X_k, x), \quad f^*(x) = \E[ h_p(X, x)],
   \\ M^i_p &= \frac{1}{i-1}\sum_{k=1}^{t-1} M_p(X_k),\quad M_p = E[M_p(X)].
\end{align*}

\subsection{Preliminary Results}

For completeness, we enunciate the following theorems, which will be exploited in subsequent proofs. 

\begin{theorem} [SLLN for Banach-valued random variables. {\citet[Theorem 2.4]{bosq2000linear}}.] \label{theorem:slln_banach}
Let $B$ be a separable Banach space with norm $\|\cdot\|_B$. Let $(\chi_t)_{t \geq 1}$ be a sequence of i.i.d. integrable $B$-valued random variables. Then
\begin{align*}
    \| \frac{1}{t} \sum_{i = 1}^t \chi_i - \E[\chi] \|_B \stackrel{a.s.}{\to} 0.
\end{align*}
\end{theorem}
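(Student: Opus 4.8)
The plan is to prove Mourier's strong law by a finite-dimensional approximation that reduces the statement to the classical real-valued SLLN, which I will freely invoke. Without loss of generality I would center the variables, replacing $\chi_i$ by $\chi_i - \E[\chi]$, so that it suffices to show $\frac{1}{t}\sum_{i=1}^t \chi_i \cas 0$ under the assumption $\E[\chi] = 0$. Integrability here means $\chi$ is strongly measurable with $\E\|\chi\|_B < \infty$, so the Bochner integral $\E[\chi]$ is well defined; separability of $B$ guarantees $\chi$ is almost surely valued in a separable subspace, which is what makes the approximation below legitimate.

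First I would fix $\epsilon > 0$ and use the defining approximation property of the Bochner integral to choose a simple random variable $\chi^{(\epsilon)} = \sum_{j=1}^m b_j \mathbf{1}_{A_j}$, with $b_j \in B$, satisfying $\E\|\chi - \chi^{(\epsilon)}\|_B < \epsilon$. Setting $R^{(\epsilon)} := \chi - \chi^{(\epsilon)}$ and letting $(\chi_i^{(\epsilon)})_i$, $(R_i^{(\epsilon)})_i$ be the induced i.i.d. copies, I would split
\[
    \frac{1}{t}\sum_{i=1}^t \chi_i = \frac{1}{t}\sum_{i=1}^t \chi_i^{(\epsilon)} + \frac{1}{t}\sum_{i=1}^t R_i^{(\epsilon)}.
\]
The first sum lives in the finite-dimensional subspace $F := \mathrm{span}(b_1,\dots,b_m)$, on which all norms are equivalent; expressing $\chi_i^{(\epsilon)}$ in coordinates (bounded, hence integrable reals) and applying the scalar SLLN componentwise gives $\frac{1}{t}\sum_{i=1}^t \chi_i^{(\epsilon)} \cas \E[\chi^{(\epsilon)}]$ in the $B$-norm. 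For the remainder I would bound $\big\|\frac{1}{t}\sum_{i=1}^t R_i^{(\epsilon)}\big\|_B \le \frac{1}{t}\sum_{i=1}^t \|R_i^{(\epsilon)}\|_B$ by the triangle inequality, then apply the scalar SLLN to the i.i.d. integrable reals $\|R_i^{(\epsilon)}\|_B$ to get $\frac{1}{t}\sum_{i=1}^t \|R_i^{(\epsilon)}\|_B \cas \E\|R^{(\epsilon)}\|_B < \epsilon$. Combining these with $\|\E[\chi^{(\epsilon)}] - \E[\chi]\|_B \le \E\|R^{(\epsilon)}\|_B < \epsilon$ via the triangle inequality yields, almost surely, $\limsup_t \big\|\frac{1}{t}\sum_{i=1}^t \chi_i - \E[\chi]\big\|_B \le 2\epsilon$.

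The main obstacle is the familiar one for such approximation arguments: the exceptional null set above depends on $\epsilon$, so one cannot naively let $\epsilon \to 0$ within a single almost-sure event. I would circumvent this by running the argument only along the countable sequence $\epsilon = 1/n$, $n \in \Nb$, and discarding the union of the (countably many) associated null sets. On the complement of this union the bound $\limsup_t \big\|\frac{1}{t}\sum_{i=1}^t \chi_i - \E[\chi]\big\|_B \le 2/n$ holds simultaneously for every $n$, which forces the $\limsup$ to vanish and establishes the claimed almost-sure convergence.
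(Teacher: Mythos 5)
Your proof is correct, but there is nothing in the paper to compare it against: the paper does not prove this statement at all --- it is imported as a known result, cited as Theorem 2.4 of \citet{bosq2000linear}, and listed under ``Preliminary Results'' purely so that it can be invoked later (in Propositions \ref{proposition:first_order} and \ref{proposition:second_order}). What you have written is a self-contained proof of Mourier's classical SLLN, and it is the standard argument: quantize the Bochner-integrable variable by a simple function within $\epsilon$ in mean, handle the simple part via the scalar SLLN in a finite-dimensional subspace (where all norms are equivalent), bound the remainder by the triangle inequality plus the scalar SLLN applied to the i.i.d.\ real variables $\|R_i^{(\epsilon)}\|_B$, and remove the $\epsilon$-dependence of the exceptional null sets by running the argument along $\epsilon = 1/n$ and taking a countable union. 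The one step you should make explicit is that the simple approximant must be chosen as a measurable function of $\chi$ itself, say $\chi^{(\epsilon)} = \phi(\chi)$ with $\phi : B \to \{b_1, \ldots, b_m\}$ Borel (e.g., nearest-point quantization on the a.s.\ separable range guaranteed by strong measurability); this is exactly what legitimizes your phrase ``induced i.i.d.\ copies,'' since if the sets $A_j$ were arbitrary events in the underlying probability space rather than preimages $\{\chi \in E_j\}$, the approximation would not transfer across the index $i$. With that clarification the argument is complete, and its payoff relative to the paper's approach is self-containedness where the authors chose to rely on a citation.
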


\begin{theorem} [Ville's inequality.] Let $(S_t)_{t \geq 0}$ be a nonnegative supermartingale process adapted to a filtration $\{ \F_t \geq 0 \}$. It holds that
\begin{align*}
    \Pb(\exists t \geq 1 : S_t \geq x) \leq \frac{\E[S_0]}{x}
\end{align*}
for any $x > 0$.
\end{theorem}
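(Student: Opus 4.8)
The plan is to treat Ville's inequality as the supermartingale analogue of Markov's inequality and to prove it via a stopped-process argument. First I would introduce the first-crossing time
\begin{align*}
    \tau \defined \inf\{t \geq 1 : S_t \geq x\},
\end{align*}
with the convention $\inf \emptyset = \infty$, so that the event of interest is exactly $\{\exists\, t \geq 1 : S_t \geq x\} = \{\tau < \infty\}$. The whole proof then reduces to bounding $\Pb(\tau < \infty)$, and I may assume $\E[S_0] < \infty$, since otherwise the claimed bound is trivial.

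The key step is to exploit the elementary fact that a stopped supermartingale is again a supermartingale: the process $(S_{t \wedge \tau})_{t \geq 0}$ is a nonnegative supermartingale, so for each fixed $n \geq 0$ one has $\E[S_{n \wedge \tau}] \leq \E[S_{0 \wedge \tau}] = \E[S_0]$, where the last equality uses $\tau \geq 1$ so that $0 \wedge \tau = 0$. (Equivalently, this is the optional stopping theorem applied to the bounded stopping time $n \wedge \tau$.) For the matching lower bound I would use nonnegativity of $S$ together with the defining property that $S_\tau \geq x$ whenever $\tau < \infty$: on the event $\{\tau \leq n\}$ we have $n \wedge \tau = \tau$, hence
\begin{align*}
    \E[S_{n \wedge \tau}] \geq \E\lb S_{n \wedge \tau} \mathbf{1}\{\tau \leq n\}\rb = \E\lb S_\tau \mathbf{1}\{\tau \leq n\}\rb \geq x\, \Pb(\tau \leq n).
\end{align*}
Combining the two displays yields $x\, \Pb(\tau \leq n) \leq \E[S_0]$, i.e. $\Pb(\tau \leq n) \leq \E[S_0]/x$, uniformly in $n$.

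Finally I would pass to the limit $n \to \infty$. Since the events $\{\tau \leq n\}$ increase to $\{\tau < \infty\}$, continuity from below of the probability measure gives $\Pb(\tau < \infty) = \lim_{n \to \infty}\Pb(\tau \leq n) \leq \E[S_0]/x$, which is precisely the asserted inequality. I do not expect a substantive obstacle here: the argument is essentially routine, and the only points warranting care are the justification that the stopped process remains a supermartingale (so that $\E[S_{n \wedge \tau}] \leq \E[S_0]$) and the monotone passage to the limit. In particular, working in discrete time means $\tau$ takes integer values on $\{\tau < \infty\}$ and $S_\tau \geq x$ holds there by definition, so no continuity of paths is needed.
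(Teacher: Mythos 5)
Your proof is correct, but there is nothing in the paper to compare it against: the paper states Ville's inequality in its appendix purely as a preliminary, citing it as a classical result (Ville, 1939) and using it as a black box in the proofs of its Theorems 1--3; no proof is given there. Your argument is the standard optional-stopping proof and every step is valid in the paper's discrete-time setting: the reduction of the crossing event to the first-crossing time $\tau$, the fact that the stopped process $(S_{t \wedge \tau})_{t \geq 0}$ is again a nonnegative supermartingale (which in discrete time follows directly from $S_{(n+1)\wedge\tau} - S_{n\wedge\tau} = \mathbf{1}\{\tau > n\}(S_{n+1} - S_n)$ with $\mathbf{1}\{\tau > n\}$ being $\F_n$-measurable and nonnegative), the chain $x\,\Pb(\tau \leq n) \leq \E[S_\tau \mathbf{1}\{\tau \leq n\}] \leq \E[S_{n \wedge \tau}] \leq \E[S_0]$, and the monotone passage $\Pb(\tau \leq n) \uparrow \Pb(\tau < \infty)$. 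One cosmetic remark: integrability of $S_0$ is part of the definition of a supermartingale, so your waiver of the case $\E[S_0] = \infty$ is not needed, though it is harmless.
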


\subsection{Auxiliary Results} \label{appendix:auxiliary_results}

We present here two results that Theorem \ref{theorem:ksd_simple_power_lbow}  builds on. %and Theorem \ref{theorem:ksd_simple_power_ons}
 
\begin{proposition} \label{proposition:first_order}
If $\E[\sqrt{h_p(X, X)}] < \infty$ and $M_p < \infty$, then
\begin{align*}
    \frac{1}{t}\sum_{i=1}^t \left| g_i(X_i) - g^*(X_i) \right| \stackrel{a.s.}{\to} 0, \quad \frac{1}{t}\sum_{i=1}^t g_i(X_i) \stackrel{a.s.}{\to} \E[g^*(X)].
\end{align*} 
\end{proposition}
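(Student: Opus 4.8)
The plan is to prove the two almost-sure convergence statements by controlling the difference between the empirical payoff $g_i(X_i)$ and the idealized payoff $g^*(X_i)$, and then to invoke a strong law of large numbers for the averaged idealized quantity. The core object to control is the discrepancy $g_i(X_i) - g^*(X_i)$, which factors through the numerator $f_i$ and the normalizer $M_p^i$. First I would recall that $g_i(x) = f_i(x)/M_p^i$ and $g^*(x) = f^*(x)/M_p$, so that the difference decomposes (after a common-denominator manipulation) into a term governed by $|f_i(X_i) - f^*(X_i)|$ and a term governed by $|M_p^i - M_p|$. The natural strategy is to bound $|g_i(X_i) - g^*(X_i)|$ by a sum of these two pieces scaled by appropriate factors, each of which I will show averages to zero.

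The key technical step is to show that the functional discrepancy vanishes in an averaged sense. Here I would exploit the RKHS structure: since $f_i(x) = \langle \frac{1}{i-1}\sum_{k=1}^{i-1}\xi_p(\cdot, X_k),\, \xi_p(\cdot, x)\rangle_{\mathcal{H}^d}$ and $f^*(x) = \langle \E[\xi_p(\cdot, X)],\, \xi_p(\cdot, x)\rangle_{\mathcal{H}^d}$, the Cauchy--Schwarz inequality gives
\begin{align*}
    |f_i(X_i) - f^*(X_i)| \leq \left\| \frac{1}{i-1}\sum_{k=1}^{i-1}\xi_p(\cdot, X_k) - \E[\xi_p(\cdot, X)] \right\|_{\mathcal{H}^d} \|\xi_p(\cdot, X_i)\|_{\mathcal{H}^d}.
\end{align*}
The first factor tends to zero almost surely by the strong law of large numbers for Banach-valued random variables (Theorem~\ref{theorem:slln_banach}), applied to the separable Hilbert space $\mathcal{H}^d$, valid under the integrability hypothesis $\E[\|\xi_p(\cdot,X)\|_{\mathcal{H}^d}] = \E[\sqrt{h_p(X,X)}] < \infty$. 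The second factor is $\sqrt{h_p(X_i,X_i)}$, an i.i.d.\ integrable sequence. The subtlety is that the product of a null sequence and an unbounded-but-integrable sequence need not tend to zero pointwise; however, when averaged via a Cesàro-type argument, the product does average to zero. I would make this precise with a standard truncation/Toeplitz lemma: split the sum at a cutoff, use that $\frac{1}{t}\sum_{i}\sqrt{h_p(X_i,X_i)}$ converges almost surely (hence is bounded) to dominate the tail, and use that the Banach SLLN factor is eventually arbitrarily small.

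For the normalizer discrepancy, $M_p^i \to M_p$ almost surely by the ordinary SLLN (using $\E[M_p(X)] < \infty$), and since $M_p > 0$ the reciprocals are eventually bounded, so the contribution of $|M_p^i - M_p|$ to the averaged difference is handled analogously, again dominated by the convergent average of $\sqrt{h_p(X_i,X_i)}$. Combining both pieces yields the first claim, $\frac{1}{t}\sum_{i=1}^t |g_i(X_i) - g^*(X_i)| \stackrel{a.s.}{\to} 0$. The second claim then follows immediately: by the triangle inequality it suffices to note that $\frac{1}{t}\sum_{i=1}^t g^*(X_i) \stackrel{a.s.}{\to} \E[g^*(X)]$ by the scalar SLLN (the summands are i.i.d.\ with finite mean, since $\E[|g^*(X)|] < \infty$ follows from the moment assumptions), and adding the already-established averaged difference gives convergence of $\frac{1}{t}\sum_{i=1}^t g_i(X_i)$ to the same limit. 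I expect the main obstacle to be the Cesàro/Toeplitz argument handling the product of the vanishing Banach-SLLN term with the unbounded $\sqrt{h_p(X_i,X_i)}$ factor, as this is precisely where uniform boundedness of the kernel would ordinarily be used and must instead be replaced by a careful averaging argument.
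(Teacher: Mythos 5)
Your proposal is correct and follows essentially the same route as the paper's proof: the same decomposition of $g_i - g^*$ into a numerator term controlled by Cauchy--Schwarz in $\mathcal{H}^d$ together with the Banach-valued SLLN, and a normalizer term controlled by the scalar SLLN for $M_p^i$. The "subtlety" you flag — averaging the product of the vanishing Banach-SLLN factor with the unbounded but integrable $\sqrt{h_p(X_i,X_i)}$ — is exactly what the paper handles via its split of the sum at $N_1$ and the union-bound argument, so no gap remains.
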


\begin{proof}

Without loss of generality, we can assume that $M_p > 0$. We first highlight that, based on $0 < \E[\sqrt{h_p(X, X)}] < \infty$, 
\begin{itemize}
    \item $\E[\lba f^*(X) \rba]$ is finite, as $\E[\lba f^*(X) \rba] = \E_{X, X'}[\lba h_p(X', X)\rba] \leq \E_{X, X'}\left[ \left\| \xi_p(\cdot, X)  \right\|_{\mathcal{H}^d}  \left\| \xi_p(\cdot, X')  \right\|_{\mathcal{H}^d} \right] = \E_{X}\left[ \left\| \xi_p(\cdot, X)  \right\|_{\mathcal{H}^d} \right]^2 = \E[\sqrt{h_p(X, X)}]^2 < \infty$,
    \item $\E\lb\left\| \xi_p(\cdot, X)  \right\|_{\mathcal{H}^d} \rb = \E[\sqrt{h_p(X, X)}]$ is finite.
    \item $\E[\lba g^*(X)\rba]$ is finite, as $g^*(x) = f^*(x) / M_p$, $\E[\lba f^*(X) \rba]$ is also finite and $M_p > 0$. 
\end{itemize}

We now note that $\frac{1}{t}\sum_{i=1}^t \left| g_i(X_i) - g^*(X_i) \right| \stackrel{a.s.}{\to} 0$ implies $\frac{1}{t}\sum_{i=1}^t g_i(X_i) \stackrel{a.s.}{\to} \E[g^*(X)]$, i.e., $\frac{1}{t}\sum_{i=1}^t g_i(X_i) - \E[g^*(X)] \stackrel{a.s.}{\to} 0$. To see this, we decompose
\begin{align*}
    \frac{1}{t}\sum_{i=1}^t g_i(X_i) - \E[g^*(X)] = \lp \frac{1}{t}\sum_{i=1}^t g_i(X_i) - \frac{1}{t}\sum_{i=1}^t g^*(X_i) \rp + \lp\frac{1}{t}\sum_{i=1}^t g^*(X_i) - \E[g^*(X)] \rp,
\end{align*}
and we highlight that the second term converges to zero almost surely in view of the strong law of large numbers (SLLN) and $\E[\lba g^*(X)\rba] < \infty$. Further, $\frac{1}{t}\sum_{i=1}^t \left| g_i(X_i) - g^*(X_i) \right| \stackrel{a.s.}{\to} 0$ implies $\frac{1}{t}\sum_{i=1}^t g_i(X_i) - \frac{1}{t}\sum_{i=1}^t g^*(X_i) \stackrel{a.s.}{\to} 0.$

Hence, it remains to prove that
\begin{align*}
    \frac{1}{t}\sum_{i=1}^t \left| g_i(X_i) - g^*(X_i) \right| \stackrel{a.s.}{\to} 0.
\end{align*}

That is, for any given $\epsilon > 0$ and $\delta > 0$, there exists $N \in \mathbb{N}$ such that
\begin{align*}
    \Pb \lp \sup_{t \geq N} \frac{1}{t}\sum_{i=1}^t \left|g_i(X_i) - g^*(X_i) \right| > \epsilon \rp \leq \delta.
\end{align*}

In order to prove the result, we are going to decompose $\frac{1}{t}\sum_{i=1}^t \left|g_i(X_i) - g^*(X_i) \right|$, and then combine the scalar-valued SLLN and the Banach space-valued SLLN. We will finish the proof by taking a union bound over the different terms.

\textit{Introducing the SLLN terms and the probabilistic bounds:} By the scalar valued SLLN and the finiteness of $M_p > 0$, $\E[\lba f^*(X) \rba]$, and $\E\lb\left\| \xi_p(\cdot, X)  \right\|_{\mathcal{H}^d} \rb$, we have that
\begin{itemize}
    \item $M_p^i \stackrel{a.s.}{\to} M_p$, and so $\frac{    1}{M_p^i} \stackrel{a.s.}{\to} \frac{1}{M_p}$ (given that $M_p \neq 0$), 
    \item $\frac{1}{t}\sum_{i=1}^{t} \left|f^*(X_i)\right| \stackrel{a.s.}{\to} \E[\lba f^*(X) \rba]$,
    \item $\frac{1}{t}\sum_{i=1}^{t}  \left\| \xi_p(\cdot, X_i)  \right\|_{\mathcal{H}^d} \stackrel{a.s.}{\to} \E\lb\left\| \xi_p(\cdot, X)  \right\|_{\mathcal{H}^d} \rb$.
\end{itemize}

From the Banach space-valued SLLN (Theorem \ref{theorem:slln_banach}) and the finiteness of $\E\lb\left\| \xi_p(\cdot, X)  \right\|_{\mathcal{H}^d} \rb$, it also follows that
\begin{itemize}
    \item $\left\| \frac{1}{i-1} \sum_{k = 1}^{i-1} \xi_p (\cdot, X_k) - \E[\xi_p (\cdot, X)] \right\|_{\mathcal{H}^d} \cas 0$.
\end{itemize}

Hence there exist $B > 0$ and $N_1 \in \Nb$ such that
\begin{align*}
    &\Pb \lp \sup_{t \geq N_1} \left| \frac{1}{M_p^i} \right| > B \rp \leq \frac{\delta}{10},
    \\
    &\Pb \lp \sup_{t \geq N_1} \frac{1}{t}\sum_{i=1}^{t} \left|f^*(X_i)\right| > B \rp \leq \frac{\delta}{10},
    \\
    &\Pb \lp \sup_{t \geq N_1} \frac{1}{t}\sum_{i=1}^{t}  \left\| \xi_p(\cdot, X_i)  \right\|_{\mathcal{H}^d} > B \rp \leq \frac{\delta}{10},
    \\
    &\Pb \lp \sup_{t \geq N_1} \left| \frac{1}{M_p^i} - \frac{1}{M_p} \right| > \frac{\epsilon}{4B} \rp \leq \frac{\delta}{10},
    \\&\Pb \lp \sup_{i \geq N_1} \left\| \frac{1}{i-1} \sum_{k = 1}^{i-1} \xi_p (\cdot, X_k) - \E[\xi_p (\cdot, X)] \right\|_{\mathcal{H}^d} > \frac{\epsilon}{4B^2} \rp \leq \frac{\delta}{10}.
\end{align*}

Note that $\Pb \lp \sup_{t \geq N_1} \frac{1}{t}\sum_{i=1}^{t}  \left\| \xi_p(\cdot, X_i)  \right\|_{\mathcal{H}^d} > B \rp \leq \frac{\delta}{10}$ and $\Pb \lp \sup_{t \geq N_1} \frac{1}{t}\sum_{i=1}^{t} \left|f^*(X_i)\right| > B \rp \leq \frac{\delta}{10}$ imply that $\Pb \lp \sup_{t \geq 2N_1} \frac{1}{t-N_1}\sum_{i=N_1}^{t}  \left\| \xi_p(\cdot, X_i)  \right\|_{\mathcal{H}^d} > B \rp \leq \frac{\delta}{10}$ and $\Pb \lp \sup_{t \geq 2N_1} \frac{1}{t-N_1}\sum_{i=N_1}^{t} \left|f^*(X_i)\right| > B \rp \leq \frac{\delta}{10}$, given that the data are iid.

\textit{Combining the probabilistic bounds:} We start by noting that
\begin{align*}
     \frac{1}{t}\sum_{i=1}^t \left|g_i(X_i) - g^*(X_i) \right| &= \frac{1}{t}\sum_{i=1}^{N_1-1} \left| g_i(X_i) - g^*(X_i) \right| + \frac{1}{t}\sum_{i=N_1}^{t} \left| g_i(X_i) - g^*(X_i) \right|.
\end{align*}
Consider the sequence of random variables $Y_t = \frac{1}{t}\left(\sum_{i=1}^{N_1-1} \left| g_i(X_i) - g^*(X_i) \right|\right)$. Clearly, $Y_t \stackrel{a.s.}{\to} 0$. Thus there exists $N_2$ such that $\sup_{t \geq N_2} Y_t \leq \frac{\epsilon}{2}$ with probability at least $1 - \delta /2$. 

Moreover,
\begin{align*}
    \frac{1}{t}\sum_{i=N_1}^{t} \left| g_i(X_i) - g^*(X_i) \right| &= \frac{1}{t}\sum_{i=N_1}^{t} \left| \frac{1}{M_p^i} f_i(X_i) - \frac{1}{M_p}f^*(X_i) \right|
    \\ &= \frac{1}{t}\sum_{i=N_1}^{t} \left| \frac{1}{M_p^i} f_i(X_i) - \frac{1}{M_p^i} f^*(X_i) + \frac{1}{M_p^i} f^*(X_i) - \frac{1}{M_p}f^*(X_i) \right|
    \\ &\leq \underbrace{\frac{1}{t}\sum_{i=N_1}^{t} \left| \frac{1}{M_p^i} f_i(X_i) - \frac{1}{M_p^i} f^*(X_i) \right|}_{(I)} + 
    \\ &+\underbrace{\frac{1}{t}\sum_{i=N_1}^{t}\left|\frac{1}{M_p^i} f^*(X_i) - \frac{1}{M_p}f^*(X_i) \right|}_{(II)}.
\end{align*}

We now handle terms (I) and (II) separately. For term (II), we observe that
\begin{align*}
    \sup_{t \geq 2N_1}\frac{1}{t}\sum_{i=N_1}^{t}\left|\frac{1}{M_p^i} f^*(X_i) - \frac{1}{M_p}f^*(X_i) \right|
    &= \sup_{t \geq  2N_1}\frac{1}{t}\sum_{i=N_1}^{t}\left| \frac{    1}{M_p^i} - \frac{1}{M_p} \right| \left|f^*(X_i)\right|
    \\&\leq \lp\sup_{t \geq 2N_1}\left|\frac{1}{M_p^i} - \frac{1}{M_p}\right| \rp\lp\sup_{t \geq 2N_1} \frac{1}{t}\sum_{i=N_1}^{t} \left|f^*(X_i)\right|\rp
    \\&\leq \lp\sup_{t \geq N_1}\left|\frac{1}{M_p^i} - \frac{1}{M_p}\right| \rp\lp\sup_{t \geq 2N_1} \frac{1}{t-N_1}\sum_{i=N_1}^{t} \left|f^*(X_i)\right|\rp.
\end{align*}
Note that this is upper bounded by $ \frac{\epsilon}{4B} B = \frac{\epsilon}{4}$ with probability $1 - \frac{2\delta}{10}$ in view of the union bound. 

For term (I), we have that
\begin{align*}
    \sup_{t \geq 2N_1}\frac{1}{t}\sum_{i=N_1}^{t}  \left| \frac{1}{M_p^i} f_i(X_i) - \frac{1}{M_p^i} f^*(X_i) \right| &\leq \lp\sup_{t \geq N_1}\left|\frac{1}{M_p^i}\right| \rp\lp\sup_{t \geq 2N_1} \frac{1}{t}\sum_{i=N_1}^{t} \left|f_i(X_i) - f^*(X_i)\right|\rp.
\end{align*}

Now we observe that
\begin{align*}
    \sup_{t \geq 2N_1} \frac{1}{t}\sum_{i=N_1}^{t} \left|f_i(X_i) - f^*(X_i)\right| &= \sup_{t \geq 2N_1} \frac{1}{t}\sum_{i=N_1}^{t} \left\langle\frac{1}{i-1} \sum_{k = 1}^{i-1} \xi_p (\cdot, X_k) - \E[\xi_p (\cdot, X)], \xi_p(\cdot, X_i)  \right\rangle_{\mathcal{H}^d}
    \\&\leq \sup_{t \geq 2N_1} \frac{1}{t}\sum_{i=N_1}^{t} \left\|\frac{1}{i-1} \sum_{k = 1}^{i-1} \xi_p (\cdot, X_k) - \E[\xi_p (\cdot, X)] \right\|_{\mathcal{H}^d} \left\| \xi_p(\cdot, X_i)  \right\|_{\mathcal{H}^d}
    \\&\leq \sup_{i \geq 2N_1} \left\{ \left\|\frac{1}{i-1} \sum_{k = 1}^{i-1} \xi_p (\cdot, X_k) - \E[\xi_p (\cdot, X)] \right\|_{\mathcal{H}^d} \right\} \sup_{t \geq 2N_1} \left\{ \frac{1}{t}\sum_{i=N_1}^{t}  \left\| \xi_p(\cdot, X_i)  \right\|_{\mathcal{H}^d}\right\}
    \\&\leq \sup_{i \geq N_1} \left\{ \left\|\frac{1}{i-1} \sum_{k = 1}^{i-1} \xi_p (\cdot, X_k) - \E[\xi_p (\cdot, X)] \right\|_{\mathcal{H}^d} \right\} \\&\quad \times\sup_{t \geq 2N_1} \left\{ \frac{1}{t-N_1}\sum_{i=N_1}^{t}  \left\| \xi_p(\cdot, X_i)  \right\|_{\mathcal{H}^d}\right\}
\end{align*}
Note that this is upper bounded by $\frac{\epsilon}{4B^2} B = \frac{\epsilon}{4B}$ with probability $\frac{2\delta}{10}$ in view of the union bound. Thus term (I) is upper bounded by $\frac{\epsilon}{4B} B = \frac{\epsilon}{4}$  with probability $1 - \frac{3\delta}{10}$ in view of the union bound.

\textit{Concluding the step by considering the union bound over all the terms:} Taking $M:= \max(2N_1, N_2)$, we conclude that
\begin{align*}
    \sup_{t \geq M} \left|\frac{1}{t}\sum_{i=1}^t g_i(X_i) - \frac{1}{t}\sum_{i=1}^t g^*(X_i) \right| \leq \frac{\epsilon}{2} + \frac{\epsilon}{4} + \frac{\epsilon}{4} = \epsilon
\end{align*}
with probability $1 - (\frac{\delta}{2} + \frac{2\delta}{10} + \frac{3\delta}{10}) = 1 - \delta$ in view of the union bound.

\end{proof}

\begin{proposition} \label{proposition:second_order}
If $\E[h_p(X, X)] < \infty$ and $M_p < \infty$, then
\begin{align*}
    \frac{1}{t}\sum_{i=1}^t g_i^2(X_i) \stackrel{a.s.}{\to} \E[\lp g^*(X)\rp^2].
\end{align*} 
\end{proposition}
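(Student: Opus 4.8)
The plan is to follow the architecture of Proposition~\ref{proposition:first_order}, replacing every first-order control by its second-order analogue; the only genuinely new ingredient is that the stronger hypothesis $\E[h_p(X,X)] < \infty$ (rather than merely $\E[\sqrt{h_p(X,X)}] < \infty$) lets me tame the Ces\`aro averages of $h_p(X_i, X_i) = \|\xi_p(\cdot, X_i)\|_{\Hh^d}^2$. As in Proposition~\ref{proposition:first_order} I assume without loss of generality that $M_p > 0$. Writing $\mu := \E[\xi_p(\cdot, X)]$, so that $f^*(x) = \langle \mu, \xi_p(\cdot, x)\rangle_{\Hh^d}$, Cauchy--Schwarz gives $(f^*(x))^2 \le \|\mu\|_{\Hh^d}^2\, h_p(x,x)$; hence $\E[(f^*(X))^2] \le \|\mu\|_{\Hh^d}^2\, \E[h_p(X,X)] < \infty$ and therefore $\E[(g^*(X))^2] = \E[(f^*(X))^2]/M_p^2 < \infty$. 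The scalar SLLN then yields $\frac{1}{t}\sum_{i=1}^t (g^*(X_i))^2 \cas \E[(g^*(X))^2]$, so it suffices to prove the ``swapping'' statement $\frac{1}{t}\sum_{i=1}^t \lba g_i^2(X_i) - (g^*(X_i))^2 \rba \cas 0$.

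To that end I separate the normalizing factors from the kernel averages. Writing $g_i^2 = f_i^2/(M_p^i)^2$ and $(g^*)^2 = (f^*)^2/M_p^2$, I add and subtract $(f^*(X_i))^2/(M_p^i)^2$ to split $g_i^2(X_i) - (g^*(X_i))^2$ into $\frac{1}{(M_p^i)^2}\big(f_i^2(X_i) - (f^*(X_i))^2\big)$ and $\big(\frac{1}{(M_p^i)^2} - \frac{1}{M_p^2}\big)(f^*(X_i))^2$. The second piece is controlled exactly as the analogous term in Proposition~\ref{proposition:first_order}: $M_p^i \cas M_p$ forces $\frac{1}{(M_p^i)^2} - \frac{1}{M_p^2} \cas 0$, while $\frac{1}{t}\sum_{i=1}^t (f^*(X_i))^2 \cas \E[(f^*(X))^2] < \infty$ by the scalar SLLN, so its Ces\`aro average vanishes. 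For the first piece, the eventual boundedness of $(M_p^i)^{-2}$ reduces matters, after splitting off finitely many initial indices, to showing $\frac{1}{t}\sum_{i=1}^t \lba f_i^2(X_i) - (f^*(X_i))^2 \rba \cas 0$; setting $\mu_i := \frac{1}{i-1}\sum_{k=1}^{i-1}\xi_p(\cdot, X_k)$, the difference-of-squares identity $f_i^2(x) - (f^*(x))^2 = \langle \mu_i - \mu,\, \xi_p(\cdot, x)\rangle_{\Hh^d}\,\langle \mu_i + \mu,\, \xi_p(\cdot, x)\rangle_{\Hh^d}$ and Cauchy--Schwarz give $\lba f_i^2(X_i) - (f^*(X_i))^2 \rba \le \|\mu_i - \mu\|_{\Hh^d}\, \|\mu_i + \mu\|_{\Hh^d}\, h_p(X_i, X_i)$.

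The crux is thus to show $\frac{1}{t}\sum_{i=1}^t \|\mu_i - \mu\|_{\Hh^d}\, \|\mu_i + \mu\|_{\Hh^d}\, h_p(X_i, X_i) \cas 0$. Here the Banach-valued SLLN (Theorem~\ref{theorem:slln_banach}), applicable because $\E[\|\xi_p(\cdot, X)\|_{\Hh^d}] = \E[\sqrt{h_p(X,X)}] < \infty$, gives $\mu_i \cas \mu$; consequently $\|\mu_i - \mu\|_{\Hh^d} \cas 0$ and $\|\mu_i + \mu\|_{\Hh^d}$ is eventually bounded by some finite $B$. Simultaneously the scalar SLLN with $\E[h_p(X,X)] < \infty$ gives $\frac{1}{t}\sum_{i=1}^t h_p(X_i, X_i) \cas \E[h_p(X,X)]$, so the Ces\`aro averages of $h_p(X_i, X_i)$ stay bounded. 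I then split the sum at a large index $N$: the finitely many terms $i < N$ contribute an almost surely finite quantity divided by $t$, which vanishes; for $i \ge N$ I bound $\|\mu_i - \mu\|_{\Hh^d}$ uniformly by an arbitrarily small $\epsilon'$ and $\|\mu_i + \mu\|_{\Hh^d}$ by $B$, leaving at most $\epsilon' B \cdot \frac{1}{t}\sum_{i=N}^t h_p(X_i, X_i)$, which is $O(\epsilon')$. The main obstacle is precisely this coupling of the decaying factor $\|\mu_i - \mu\|_{\Hh^d}$ against the possibly unbounded $h_p(X_i, X_i)$: pointwise decay alone does not suffice, and it is only because the Ces\`aro average of $h_p(X_i, X_i)$ is tamed by $\E[h_p(X,X)] < \infty$ that the product average collapses. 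Making the ``eventually'' statements quantitative through the same $\sup_{t \ge N}$ union-bound bookkeeping as in the proof of Proposition~\ref{proposition:first_order} turns this outline into a rigorous argument.
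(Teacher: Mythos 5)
Your proposal is correct and follows essentially the same route as the paper's proof: both reduce the problem to controlling $\frac{1}{t}\sum_i \|\mu_i - \mu\|_{\Hh^d}^{\,\cdot}\, h_p(X_i,X_i)$ by pairing the Banach-valued SLLN for $\mu_i \to \mu$ against the scalar SLLN for the Ces\`aro averages of $h_p(X_i,X_i)$ and $(f^*(X_i))^2$, with the same split-at-$N$ union-bound bookkeeping. The only difference is cosmetic: you factor $g_i^2 - (g^*)^2$ as a difference of squares, which avoids the cross-term that the paper handles by expanding $(g_i - g^* + g^*)^2$ and applying Cauchy--Schwarz to $\frac{1}{t}\sum_i (g_i - g^*)\,g^*$ — a mildly cleaner reduction, but the same argument in substance.
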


\begin{proof}

Without loss of generality, we can further assume that $M_p > 0$. We first highlight that, based on $0 < \E[h_p(X, X)] < \infty$,
\begin{itemize}
    %\item $M_p > 0$. Otherwise, $h_p(x, y) = 0$ for all $x, y$, and so $\E[h_p(X, X)] = 0$,
    \item both $\E[\lba f^*(X) \rba]$ and $\E[(f^*(X))^2]$ are finite, as $\E[(f^*(X))^2] = \E_X[ \E_{X'}[ h^2_p(X', X)]] = \E_{X, X'}[ h^2_p(X', X)] \leq \E_{X, X'}\left[ \left\| \xi_p(\cdot, X)  \right\|_{\mathcal{H}^d}^2  \left\| \xi_p(\cdot, X')  \right\|_{\mathcal{H}^d}^2 \right] = \E_{X}\left[ \left\| \xi_p(\cdot, X)  \right\|_{\mathcal{H}^d}^2 \right]^2 = \E[h_p(X, X)]^2 < \infty$ and $\E^2[f^*(X)] \leq \E[(f^*(X))^2]$,
    \item both $\E\lb\left\| \xi_p(\cdot, X)  \right\|_{\mathcal{H}^d} \rb$ and $\E\lb\left\| \xi_p(\cdot, X)  \right\|^2_{\mathcal{H}^d} \rb$ is finite, as $\E_{X}\left[ \left\| \xi_p(\cdot, X)  \right\|_{\mathcal{H}^d}^2 \right] = \E[h_p(X, X)] < \infty$ and $\E^2\lb\left\| \xi_p(\cdot, X)  \right\|_{\mathcal{H}^d} \rb \leq \E\lb\left\| \xi_p(\cdot, X)  \right\|^2_{\mathcal{H}^d} \rb$.
    \item both $\E[\lba g^*(X)\rba]$ and $\E[\lp g^*(X)\rp^2]$ are finite, as $g^*(x) = f^*(x) / M_p$, $\E[\lba f^*(X) \rba]$ and $\E[\lp f^*(X) \rp^2]$ are also finite, and $M_p > 0$.
\end{itemize}

We want to show that $\frac{1}{t}\sum_{i=1}^t g_i^2(X_i) \stackrel{a.s.}{\to} \E[(g^*(X))^2]$, i.e. $\frac{1}{t}\sum_{i=1}^t g_i^2(X_i) - \E[(g^*(X))^2] \stackrel{a.s.}{\to} 0$. We decompose 
\begin{align*}
    \frac{1}{t}\sum_{i=1}^t g_i^2(X_i) - \E[(g^*(X))^2] &= \frac{1}{t}\sum_{i=1}^t \lp g_i(X_i) - g^*(X_i) + g^*(X_i)\rp^2 - \E[(g^*(X))^2] 
    \\&= \frac{1}{t}\sum_{i=1}^t \lp g_i(X_i) - g^*(X_i)\rp^2 - \frac{2}{t}\sum_{i=1}^t \lp g_i(X_i) - g^*(X_i)\rp g^*(X_i) + 
    \\&+ \frac{1}{t}\sum_{i=1}^t \lp g^*(X_i)\rp^2 - \E[(g^*(X))^2].
\end{align*}
The third term converges to zero almost surely in view of $\E[\lp g^*(X) \rp^2] < \infty$ and the SLLN. For the second term, we apply Cauchy-Schwarz inequality to obtain 
\begin{align*}
    \lba \frac{2}{t}\sum_{i=1}^t \lp g_i(X_i) - g^*(X_i)\rp g^*(X_i) \rba &\leq  \frac{2}{t}\sum_{i=1}^t\lba \lp g_i(X_i) - g^*(X_i)\rp g^*(X_i) \rba
    \\&\leq 2 \lb \frac{1}{t}\sum_{i=1}^t \lp g_i(X_i) - g^*(X_i)\rp^2 \rb^{\frac{1}{2}} \lb \frac{1}{t} \sum_{i=1}^t \lp g^*(X_i) \rp^2 \rb^{\frac{1}{2}}
\end{align*}

Given that $\frac{1}{t} \sum_{i=1}^t \lp g^*(X_i) \rp^2 \cas \E[\lp g^*(X) \rp^2]$, we derive
\begin{align*}
    \lb \frac{1}{t} \sum_{i=1}^t \lp g^*(X_i) \rp^2 \rb^{\frac{1}{2}} \cas E^{\frac{1}{2}}[\lp g^*(X) \rp^2].
\end{align*}
Note that if $\frac{1}{t}\sum_{i=1}^t \lp g_i(X_i) - g^*(X_i)\rp^2 \cas 0$, it also follows that $\lb \frac{1}{t}\sum_{i=1}^t \lp g_i(X_i) - g^*(X_i)\rp^2 \rb^{\frac{1}{2}} \cas 0$, implying that the second term converges to zero almost surely. Thus it suffices to show that the first term converges to zero almost surely, i.e.
$\frac{1}{t}\sum_{i=1}^t \lp g_i(X_i) - g^*(X_i)\rp^2 \cas 0$, to conclude the result. We prove so similarly to Proposition \ref{proposition:first_order}.  

\textit{Introducing the SLLN terms and the probabilistic bounds:} By the SLLN and the finiteness of $M_p$, $\E[\lp f^*(X) \rp^2]$, and $\E\lb\left\| \xi_p(\cdot, X)  \right\|_{\mathcal{H}^d}^2 \rb$, we have that
\begin{itemize}
    \item $M_p^i \stackrel{a.s.}{\to} M_p$, and so $\frac{    1}{M_p^i} \stackrel{a.s.}{\to} \frac{1}{M_p}$ (given that $M_p \neq 0$), 
    \item $\frac{1}{t}\sum_{i=M_1}^{t} \lp f^*(X_i)\rp^2 \stackrel{a.s.}{\to} \E[\lp f^*(X)\rp^2]$,
    \item $\frac{1}{t}\sum_{i=1}^{t}  \left\| \xi_p(\cdot, X_i)  \right\|_{\mathcal{H}^d}^2 \stackrel{a.s.}{\to} \E\lb\left\| \xi_p(\cdot, X)  \right\|_{\mathcal{H}^d}^2 \rb$.
\end{itemize}

From the Banach space-valued SLLN (Theorem \ref{theorem:slln_banach}) and the finiteness of $\E\lb\left\| \xi_p(\cdot, X)  \right\|_{\mathcal{H}^d} \rb$, it also follows that
\begin{itemize}
    \item $\left\| \frac{1}{i-1} \sum_{k = 1}^{i-1} \xi_p (\cdot, X_k) - \E[\xi_p (\cdot, X)] \right\|_{\mathcal{H}^d} \cas 0$.
\end{itemize}

Hence there exist $B > 0$ and $N_1 \in \Nb$ such that
\begin{align*}
    &\Pb \lp \sup_{t \geq N_1} \lp \frac{1}{M_p^i} \rp^2 > B \rp \leq \frac{\delta}{10},
    \\
    &\Pb \lp \sup_{t \geq N_1} \frac{1}{t}\sum_{i=1}^{t} \lp f^*(X_i)\rp^2 > B \rp \leq \frac{\delta}{10},
    \\
    &\Pb \lp \sup_{t \geq N_1} \frac{1}{t}\sum_{i=1}^{t}  \left\| \xi_p(\cdot, X_i)  \right\|_{\mathcal{H}^d}^2 > B \rp \leq \frac{\delta}{10},
    \\
    &\Pb \lp \sup_{t \geq N_1} \lp \frac{1}{M_p^i} - \frac{1}{M_p} \rp^2 > \frac{\epsilon}{8B} \rp \leq \frac{\delta}{10}.
    \\
    &\Pb \lp \sup_{i \geq N_1} \left\| \frac{1}{i-1} \sum_{k = 1}^{i-1} \xi_p (\cdot, X_k) - \E[\xi_p (\cdot, X)] \right\|_{\mathcal{H}^d}^2 > \frac{\epsilon}{8B^2} \rp \leq \frac{\delta}{10}.
\end{align*}

Note that $\Pb \lp \sup_{t \geq N_1} \frac{1}{t}\sum_{i=1}^{t}  \left\| \xi_p(\cdot, X_i)  \right\|_{\mathcal{H}^d}^2 > B \rp \leq \frac{\delta}{10}$ and $\Pb \lp \sup_{t \geq N_1} \frac{1}{t}\sum_{i=1}^{t} \lp f^*(X_i)\rp^2 > B \rp \leq \frac{\delta}{10}$ imply that $\Pb \lp \sup_{t \geq 2N_1} \frac{1}{t-N_1}\sum_{i=N_1}^{t}  \left\| \xi_p(\cdot, X_i)  \right\|_{\mathcal{H}^d}^2 > B \rp \leq \frac{\delta}{5}$ and $\Pb \lp \sup_{t \geq 2N_1} \frac{1}{t-N_1}\sum_{i=N_1}^{t} \lp f^*(X_i)\rp^2 > B \rp \leq \frac{\delta}{10}$, given that the data are iid.

\textit{Combining the probabilistic bounds:} We start by noting that
\begin{align*}
     \frac{1}{t}\sum_{i=1}^t \lp g_i(X_i) - g^*(X_i) \rp^2 &= \frac{1}{t}\sum_{i=1}^{N_1-1} \lp g_i(X_i) - g^*(X_i) \rp^2 + \frac{1}{t}\sum_{i=N_1}^{t} \lp g_i(X_i) - g^*(X_i) \rp^2.
\end{align*}

Consider the sequence of random variables $Y_t = \frac{1}{t}\left(\sum_{i=1}^{N_1-1} \lp g_i(X_i) - g^*(X_i) \rp^2\right)$. Clearly, $Y_t \stackrel{a.s.}{\to} 0$. Thus there exists $N_2$ such that $\sup_{t \geq N_2} Y_t \leq \frac{\epsilon}{2}$ with probability $1 - \delta / 2$.

Moreover,
\begin{align*}
    \frac{1}{t}\sum_{i=N_1}^t \lp g_i(X_i) - g^*(X_i)\rp^2 &= \frac{1}{t}\sum_{i=N_1}^{t} \left( \frac{1}{M_p^i} f_i(X_i) - \frac{1}{M_p^i} f^*(X_i) + \frac{1}{M_p^i} f^*(X_i) - \frac{1}{M_p}f^*(X_i) \right)^2
    \\ &\stackrel{(i)}{\leq} \underbrace{\frac{2}{t}\sum_{i=N_1}^{t} \left( \frac{1}{M_p^i} f_i(X_i) - \frac{1}{M_p^i} f^*(X_i) \right)^2}_{(I)} + 
    \\ &+\underbrace{\frac{2}{t}\sum_{i=N_1}^{t}\left(\frac{1}{M_p^i} f^*(X_i) - \frac{1}{M_p}f^*(X_i) \right)^2}_{(II)},
\end{align*}
where (i) is obtained in view of $(a + b)^2 \leq 2a^2 + 2b^2$. It now remains to prove that terms (I) and (II) converge almost surely to zero. For term (II), we observe that
\begin{align*}
    2\sup_{t \geq 2N_1}\frac{1}{t}\sum_{i=N_1}^{t}\lp\frac{1}{M_p^i} f^*(X_i) - \frac{1}{M_p}f^*(X_i) \rp^2
    &= 2\sup_{t \geq 2 N_1}\frac{1}{t}\sum_{i=N_1}^{t}\lp \frac{1}{M_p^i} - \frac{1}{M_p} \rp^2 \lp f^*(X_i)\rp^2
    \\&\leq 2\lp\sup_{t \geq 2N_1}\lp\frac{1}{M_p^i} - \frac{1}{M_p}\rp^2 \rp\lp\sup_{t \geq 2N_1} \frac{1}{t}\sum_{i=N_1}^{t} \lp f^*(X_i)\rp^2\rp
    \\&\leq 2\lp\sup_{t \geq N_1}\lp\frac{1}{M_p^i} - \frac{1}{M_p}\rp^2 \rp\lp\sup_{t \geq 2N_1} \frac{1}{t-N_1}\sum_{i=N_1}^{t} \lp f^*(X_i)\rp^2\rp.
\end{align*}
Note that this is upper bounded by $2 \frac{\epsilon}{8B} B = \frac{\epsilon}{4}$ with probability $1 - \frac{2\delta}{10}$ in view of the union bound. 

For term (I), we have that
\begin{align*}
    2\sup_{t \geq 2N_1}\frac{1}{t}\sum_{i=N_1}^{t}  \lp \frac{1}{M_p^i} f_i(X_i) - \frac{1}{M_p^i} f^*(X_i) \rp^2 &\leq 2\lp\sup_{t \geq N_1}\lp\frac{1}{M_p^i}\rp^2\rp\lp\sup_{t \geq 2N_1} \frac{1}{t}\sum_{i=N_1}^{t} \lp f_i(X_i) - f^*(X_i)\rp^2\rp.
\end{align*}

Now we observe that
\begin{align*}
    \sup_{t \geq 2N_1} \frac{1}{t}\sum_{i=N_1}^{t} \lp f_i(X_i) - f^*(X_i)\rp^2 &= \sup_{t \geq 2N_1} \frac{1}{t}\sum_{i=N_1}^{t} \left\langle \E[\xi_p (\cdot, X)] - \frac{1}{i-1} \sum_{k = 1}^{i-1} \xi_p (\cdot, X_k), \xi_p(\cdot, X_i)  \right\rangle_{\mathcal{H}^d}^2
    \\&\leq \sup_{t \geq 2N_1} \frac{1}{t}\sum_{i=N_1}^{t} \left\| \E[\xi_p (\cdot, X)] - \frac{1}{i-1} \sum_{k = 1}^{i-1} \xi_p (\cdot, X_k) \right\|_{\mathcal{H}^d}^2\left\|\xi_p(\cdot, X_i)  \right\|_{\mathcal{H}^d}^2
    \\&\leq \sup_{t \geq 2N_1}\left\| \E[\xi_p (\cdot, X)] - \frac{1}{i-1} \sum_{k = 1}^{i-1} \xi_p (\cdot, X_k) \right\|_{\mathcal{H}^d}^2 \sup_{t \geq 2N_1}\frac{1}{t}\sum_{i=N_1}^{t} \left\|\xi_p(\cdot, X_i)  \right\|_{\mathcal{H}^d}^2
    \\&\leq \sup_{t \geq N_1}\left\| \E[\xi_p (\cdot, X)] - \frac{1}{i-1} \sum_{k = 1}^{i-1} \xi_p (\cdot, X_k) \right\|_{\mathcal{H}^d}^2 \sup_{t \geq 2N_1}\frac{1}{t-N_1}\sum_{i=N_1}^{t} \left\|\xi_p(\cdot, X_i)  \right\|_{\mathcal{H}^d}^2
\end{align*}

Note that this is upper bounded by $\frac{\epsilon}{8B^2} B = \frac{\epsilon}{8}B$ with probability $1 - \frac{2\delta}{10}$ in view of the union bound, and hence term (I) is upper bounded by $2\frac{\epsilon}{8B} B = \frac{\epsilon}{4}$ with probability $1 - \frac{3\delta}{10}$ in view of the union bound.

\textit{Concluding the step by considering the union bound over all the terms:}

Taking $N:= \max(2N_1, N_2)$, we conclude that
\begin{align*}
    \sup_{t \geq N} \left|\frac{1}{t}\sum_{i=1}^t g_i(X_i) - \frac{1}{t}\sum_{i=1}^t g^*(X_i) \right| \leq \frac{\epsilon}{2} + \frac{\epsilon}{4} + \frac{\epsilon}{4} = \epsilon
\end{align*}
with probability $1 - (\frac{\delta}{2} + \frac{2\delta}{10} + \frac{3\delta}{10}) = 1 - \delta$ in view of the union bound.

\end{proof}

\subsection{Proofs of Main Theorems}

\begin{proof}[Proof of Theorem \ref{theorem:ksd_simple}]

We first note that
\begin{align*}
\mathbb{E}_{H_0}\left[ g_t(X_t) \big|\F_{t-1}\right] &= \mathbb{E}_{H_0}\lb \frac{1}{\frac{1}{t-1}\sum_{i=1}^{t-1} M_p(X_i)}\left(\frac{1}{t-1} \sum_{i = 1}^{t-1} h_p(X_i, X_t)\right) \bigg|\F_{t-1}\rb
\\ &=  \frac{1}{\frac{1}{t-1}\sum_{i=1}^{t-1} M_p(X_i)} \mathbb{E}_{H_0}\left[ \frac{1}{t-1} \sum_{i = 1}^{t-1} h_p(X_i, X_t)\bigg|\F_{t-1}\right]
\\ &\stackrel{(i)}{=}  \frac{1}{\frac{1}{t-1}\sum_{i=1}^{t-1} M_p(X_i)} \times 0
\\&= 0,
\end{align*}
where (i) is obtained in view of \eqref{eq:ksd_zero_mean}.

Hence
\begin{align*}
\mathbb{E}_{H_0}\left[\wealth_t\big|\F_{t-1}\right] &= \mathbb{E}_{H_0}\left[\wealth_{t-1} \times \lp 1 + \lambda_t g_t(X_t)  \rp\big|\F_{t-1}\right]
\\ 
&= \wealth_{t-1} \times \lp 1 + \lambda_t\mathbb{E}_{H_0}\left[ g_t(X_t) \big|\F_{t-1}\right] \rp
\\ &= \wealth_{t-1} \times \lp 1 + 0  \rp 
\\&= \wealth_{t-1},
\end{align*}
which implies that $\wealth_{t-1}$ is a martingale. Furthermore, 
\begin{align*}
    g_t(x) &=  \frac{1}{\frac{1}{t-1}\sum_{i=1}^{t-1} M_p(X_i)}\left(\frac{1}{t-1} \sum_{i = 1}^{t-1} h_p(X_i, x)\right)
     \\&\geq \frac{1}{\frac{1}{t-1}\sum_{i=1}^{t-1} M_p(X_i)}\left( \frac{1}{t-1}\sum_{i=1}^{t-1} -M_p (X_i)\right)
    \\&= -1.
\end{align*}

Given that $\lambda_t \in [0, 1]$, $\lambda_t g_t$ is also lower bounded by $-1$, and so $\wealth_t$ is non-negative. Thus, $\wealth_t$ is a test martingale. In view of $\mathbb{E}_{H_0}[\wealth_0] = 1$ and Ville's inequality, we conclude that $\tau$ is a level-$\alpha$ sequential test. 

\end{proof}

\begin{proof}[Proof of Theorem \ref{theorem:ksd_simple_power_lbow}]

We want to prove that $\liminf_{t \to \infty}\frac{\log\wealth_t}{t}  \geq \frac{\left(\E_{H_1}[g^*(X)]\right)^2 / 2}{\E_{H_1}[g^*(X)] + \E_{H_1}[\left(g^*(X)\right)^2]} =: L$.

Following the LBOW strategy, we take
\begin{align*}
    \lambda_t = \max\lp0, \frac{\frac{1}{t-1}\sum_{i = 1}^{t-1} g_i(X_i)}{\frac{1}{t-1}\sum_{i = 1}^{t-1} g_i(X_i) + \frac{1}{t-1}\sum_{i = 1}^{t-1} g_i^2(X_i)}\rp.
\end{align*}

Given that $\E[h_p(X, X)] < \infty$, it also holds that $\E[\sqrt{h_p(X, X)}] < \infty$. Consequently, Proposition \ref{proposition:first_order} and Proposition \ref{proposition:second_order} yield $\frac{1}{t-1}\sum_{i = 1}^{t-1} g_i(X_i) \cas \E[g^*(X)]$, as well as $\frac{1}{t-1}\sum_{i = 1}^{t-1} g_i^2(X_i) \cas \E[\lp g^*(X)\rp^2]$. Given that $\E[g^*(X)] > 0$, it follows that
\begin{align*}
    \lambda_t \cas \frac{\E[g^*(X)]}{\E[g^*(X)] + \E[\lp g^*(X)\rp^2]} =: \lambda^* \in (0, 1).
\end{align*}

For $y \geq -1$ and $\lambda \in [0, 1)$, it holds that 
\begin{align*}
    \log(1 + \lambda y) \geq \lambda y + y^2 \left( \log(1 - \lambda) + \lambda \right).
\end{align*}

Further, for $\lambda \in [0, 1)$, it holds that 
\begin{align*}
    \log(1 - \lambda) + \lambda \geq -\frac{\lambda^2}{2(1 - \lambda)}.
\end{align*}

Thus, for $y \geq -1$ and $\lambda \in [0, 1)$,
\begin{align*}
    \log(1 + \lambda y) \geq \lambda y - y^2\frac{\lambda^2}{2(1 - \lambda)}.
\end{align*}

Consequently, we derive that
\begin{align*}
    \frac{\log\wealth_t}{t} &= \frac{1}{t} \sum_{i = 1}^t \log \lp 1 + \lambda_i g_i(X_i) \rp
    \\ & \geq \frac{1}{t} \sum_{i = 1}^t \lambda_i g_i(X_i) - g_i^2(X_i)\frac{\lambda_i^2}{2(1 - \lambda_i)}.
\end{align*}

It thus suffices to prove that $\frac{1}{t} \sum_{i = 1}^t \lambda_i g_i(X_i) - g_i^2(X_i)\frac{\lambda_i^2}{2(1 - \lambda_i)} \cas L$ to conclude the result. To see this, we denote $\kappa(\lambda) = \frac{\lambda^2}{2(1 - \lambda)}$ and highlight that 
\begin{align*}
    \frac{1}{t} \sum_{i = 1}^t \lambda_i g_i(X_i) - g_i^2(X_i)\frac{\lambda_i^2}{2(1 - \lambda_i)} &= \frac{1}{t} \sum_{i = 1}^t \lambda_i g_i(X_i) - g_i^2(X_i)\kappa(\lambda_i)
    \\&= \frac{1}{t} \sum_{i = 1}^t (\lambda_i - \lambda^* + \lambda^*) g_i(X_i) - \lp\kappa(\lambda_i) - \kappa(\lambda^*) + \kappa(\lambda^*)\rp g_i^2(X_i)
    \\&= \underbrace{\frac{1}{t} \sum_{i = 1}^t  \lambda^* g_i(X_i) -  \kappa(\lambda^*) g_i^2(X_i)}_{(I)} +
   \\ &\quad + \underbrace{ 
    \frac{1}{t} \sum_{i = 1}^t (\lambda_i - \lambda^*) g_i(X_i)}_{(II)} - \underbrace{\frac{1}{t} \sum_{i = 1}^t\lp\kappa(\lambda_i) - \kappa(\lambda^*)\rp g_i^2(X_i)}_{(III)}.
\end{align*}
Now we note that term (I) converges almost surely to $\lambda^* \E[g^*(X)] - \kappa(\lambda^*) \E[\lp g^*(X)\rp^2] = L$, given that $\frac{1}{t}\sum_{i = 1}^{t} g_i(X_i) \cas \E[g^*(X)]$ and $\frac{1}{t}\sum_{i = 1}^{t} g_i^2(X_i) \cas \E[\lp g^*(X)\rp^2]$. Thus, it suffices to prove that (II) and (III) converge almost surely to zero to derive that
\begin{align*}
    \frac{1}{t} \sum_{i = 1}^t \lambda_i g_i(X_i) - g_i^2(X_i)\frac{\lambda_i^2}{2(1 - \lambda_i)} \cas L,
\end{align*}
hence concluding that 
\begin{align*}
    \liminf_{t\to\infty}\frac{\log\wealth_t}{t} &\geq\liminf_{t\to\infty}  \frac{1}{t} \sum_{i = 1}^t \lambda_i g_i(X_i) - g_i^2(X_i)\frac{\lambda_i^2}{2(1 - \lambda_i)} = L.
\end{align*}

Let us now prove that (II) and (III) converge almost surely to 0. Let $\epsilon > 0$ and $\delta > 0$ be arbitrary. We ought to show that there exists $N \in \mathbb{N}$ such that
\begin{align*}
    \Pb \lp \sup_{t \geq N} \left|\frac{1}{t} \sum_{i = 1}^t (\lambda_i - \lambda^*) g_i(X_i) \right| > \epsilon \rp \leq \delta, \quad  \Pb \lp \sup_{t \geq N} \left|\frac{1}{t} \sum_{i = 1}^t\lp\kappa(\lambda_i) - \kappa(\lambda^*)\rp g_i^2(X_i) \right| > \epsilon \rp \leq \delta.
\end{align*}

Based on $\frac{1}{t}\sum_{i=1}^t \left| g_i(X_i) - g^*(X_i) \right| \stackrel{a.s.}{\to} 0$, we derive that that
\begin{align*}
    \frac{1}{t} \sum_{i = 1}^t \left|  g_i(X_i) \right| &= \frac{1}{t} \sum_{i = 1}^t \left|  g_i(X_i) - g^*(X_i) + g^*(X_i) \right|
    \\&\leq \underbrace{\frac{1}{t} \sum_{i = 1}^t \lba  g_i(X_i) - g^*(X_i) \rba}_{\cas 0}+ \underbrace{\frac{1}{t} \sum_{i = 1}^t\lba g^*(X_i) \rba}_{\cas \E[\lba g^*(X)\rba]},
    %&\\\frac{1}{t} \sum_{i = 1}^t   g_i^2(X_i) &= \underbrace{\frac{1}{t} \sum_{i = 1}^t   g_i^2(X_i) - \E[\lp g^*(X)\rp^2]}_{\cas 0} + \E[\lp g^*(X)\rp^2],
\end{align*}
and so $\limsup_{t\to \infty}\frac{1}{t} \sum_{i = 1}^t \left|  g_i(X_i) \right| \leq \E[\lba g^*(X)\rba]$. Furthermore, we know that $\frac{1}{t}\sum_{i = 1}^{t} g_i^2(X_i) \cas \E[\lp g^*(X)\rp^2]$. Given that $\lambda \cas \lambda^*$ and $\kappa$ is continuous, it follows that $\kappa(\lambda) \cas \kappa(\lambda^*)$. Thus, by the SLLN, there exist $B>0$ and $N_1 \in \Nb$ such that
\begin{align*}
&\Pb \lp \sup_{t \geq N_1}\frac{1}{t} \sum_{i = 1}^t \left|  g_i(X_i) \right| > B \rp \leq \frac{\delta}{3}, \quad \Pb \lp \sup_{t \geq N_1}\frac{1}{t} \sum_{i = 1}^t   g_i^2(X_i) > B \rp \leq \frac{\delta}{3},
    \\&\Pb \lp \sup_{t \geq N_1} \left|\lambda_i - \lambda^*\right| > \frac{\epsilon}{2B} \rp \leq \frac{\delta}{3}, \quad \Pb \lp \sup_{t \geq N_1} \left|\kappa(\lambda_i) - \kappa(\lambda^*)\right| > \frac{\epsilon}{2B} \rp \leq \frac{\delta}{3}.
\end{align*}

Note that $\Pb \lp \sup_{t \geq N_1}\frac{1}{t} \sum_{i = 1}^t \left|  g_i(X_i) \right| > B \rp \leq \frac{\delta}{3}$ and $\Pb \lp \sup_{t \geq N_1}\frac{1}{t} \sum_{i = 1}^t   g_i^2(X_i) > B \rp \leq \frac{\delta}{3}$ imply that $\Pb \lp \sup_{t \geq 2N_1}\frac{1}{t-N_1} \sum_{i = N_1}^t \left|  g_i(X_i) \right| > B \rp \leq \frac{\delta}{3}$ and $\Pb \lp \sup_{t \geq 2N_1}\frac{1}{t-N_1} \sum_{i = N_1}^t   g_i^2(X_i) > B \rp \leq \frac{\delta}{3}$, given that the data are iid.

Thus,
\begin{align*}
    \sup_{t \geq 2N_1} \left|\frac{1}{t} \sum_{i = N_1}^t (\lambda_i - \lambda^*) g_i(X_i) \right| &\leq  \sup_{t \geq 2N_1} \frac{1}{t} \sum_{i = N_1}^t \left| (\lambda_i - \lambda^*) g_i(X_i) \right|
    \\&\leq  \sup_{t \geq N_1} \left| \lambda_i - \lambda^*\right| \sup_{t \geq 2N_1} \frac{1}{t} \sum_{i = N_1}^t \left|  g_i(X_i) \right|
    \\&\leq  \sup_{t \geq N_1} \left| \lambda_i - \lambda^*\right| \sup_{t \geq 2N_1} \frac{1}{t-N_1} \sum_{i = N_1}^t \left|  g_i(X_i) \right|
    \\&\leq \frac{\epsilon}{2B}B
    \\&= \frac{\epsilon}{2}
\end{align*}
with probability $1 - \frac{2}{3} \delta$, as well as 
\begin{align*}
   \sup_{t \geq 2N_1} \left|\frac{1}{t} \sum_{i = N_1}^t\lp\kappa(\lambda_i) - \kappa(\lambda^*)\rp g_i^2(X_i) \right| &\leq \sup_{t \geq 2N_1} \frac{1}{t} \sum_{i = N_1}^t\left|\lp\kappa(\lambda_i) - \kappa(\lambda^*)\rp g_i^2(X_i) \right|
   \\&\leq \sup_{t \geq 2N_1} \left|\kappa(\lambda_i) - \kappa(\lambda^*) \right|
   \sup_{t \geq 2N_1} \frac{1}{t} \sum_{i = N_1}^t g_i^2(X_i)
   \\&\leq \sup_{t \geq N_1} \left|\kappa(\lambda_i) - \kappa(\lambda^*) \right|
   \sup_{t \geq 2N_1} \frac{1}{t-N_1} \sum_{i = N_1}^t g_i^2(X_i)
    \\&\leq \frac{\epsilon}{2B}B
    \\&= \frac{\epsilon}{2}
\end{align*}
with probability $1 - \frac{2}{3} \delta$.

Consider now the sequence of random variables 
\begin{align*}
    Y_t = \frac{1}{t} \sum_{i = 1}^{N_1-1} (\lambda_i - \lambda^*) g_i(X_i), \quad \tilde Y_t = \frac{1}{t} \sum_{i = 1}^{N_1 - 1}\lp\kappa(\lambda_i) - \kappa(\lambda^*)\rp g_i^2(X_i).
\end{align*}

Clearly, $Y_t \stackrel{a.s.}{\to} 0$ and $\tilde Y_t \stackrel{a.s.}{\to} 0$. Thus there exists $N_2$ such that $Y_t \leq \frac{\epsilon}{2}$ and $\tilde Y_t \leq \frac{\epsilon}{3}$, both with probability $1 - \frac{\delta}{3}$. Taking $N = \max(2N_1, N_2)$ and in view of the union bound, we derive that 
\begin{align*}
    \sup_{t \geq N} \left|\frac{1}{t} \sum_{i = 1}^t (\lambda_i - \lambda^*) g_i(X_i) \right|  &\leq \sup_{t \geq N} \left|\frac{1}{t} \sum_{i = 1}^{N_1 - 1} (\lambda_i - \lambda^*) g_i(X_i) + \frac{1}{t} \sum_{i = N_1}^t (\lambda_i - \lambda^*) g_i(X_i) \right|
    \\&\leq \sup_{t \geq N} \left|\frac{1}{t} \sum_{i = 1}^{N_1 - 1} (\lambda_i - \lambda^*) g_i(X_i)\right| + \sup_{t \geq N} \left| \frac{1}{t} \sum_{i = N_1}^t (\lambda_i - \lambda^*) g_i(X_i) \right|
    \\&\leq \frac{\epsilon}{2} + \frac{\epsilon}{2}
    \\&= \epsilon
\end{align*}
with probability $1 - (\frac{2}{3}\delta+ \frac{1}{3}\delta) = 1-\delta$, as well as
\begin{align*}
    \sup_{t \geq N} \left|\frac{1}{t} \sum_{i = 1}^t\lp\kappa(\lambda_i) - \kappa(\lambda^*)\rp g_i^2(X_i) \right|  &\leq \sup_{t \geq N} \left|\frac{1}{t} \sum_{i = 1}^{N_1 - 1} \lp\kappa(\lambda_i) - \kappa(\lambda^*)\rp g_i^2(X_i) + \frac{1}{t} \sum_{i = N_1}^t\lp\kappa(\lambda_i) - \kappa(\lambda^*)\rp g_i^2(X_i) \right|
    \\&\leq \sup_{t \geq N} \left|\frac{1}{t} \sum_{i = 1}^{N_1 - 1} \lp\kappa(\lambda_i) - \kappa(\lambda^*)\rp g_i^2(X_i)\right| + \sup_{t \geq N} \left| \frac{1}{t} \sum_{i = N_1}^t \lp\kappa(\lambda_i) - \kappa(\lambda^*)\rp g_i^2(X_i)\right|
    \\&\leq \frac{\epsilon}{2} + \frac{\epsilon}{2}
    \\&= \epsilon
\end{align*}
with probability $1 - (\frac{2}{3}\delta+ \frac{1}{3}\delta) = 1-\delta$.
\end{proof}

\begin{proof}[Proof of Theorem \ref{theorem:ksd_composite}]

Let $\theta_0 \in \Theta$ be such that $P_{\theta_0} = P$. By definition of $\wealth_t^C$, we have that $\wealth_t^C \leq \wealth_t^{\theta_0}$, with $\wealth_t^{\theta_0}$ being a test martingale. The validity of the test is concluded by noting that 
\begin{align*}
    \Pb \lp \wealth_t^C \geq \frac{1}{\alpha}\rp \leq \Pb \lp \wealth_t^{\theta_0} \geq \frac{1}{\alpha}\rp \leq \alpha,
\end{align*}
where the second inequality is obtained in view of Ville's inequality.

\end{proof}

\end{document}